\documentclass{article} 
\usepackage{iclr2026_conference,times}

\usepackage{amsmath,amsfonts,bm}

\def\eqref#1{equation~\ref{#1}}

\def\1{\bm{1}}

\def\rvx{{\mathbf{x}}}
\def\rvy{{\mathbf{y}}}

\def\mC{{\bm{C}}}

\def\mK{{\bm{K}}}

\def\mP{{\bm{P}}}

\DeclareMathAlphabet{\mathsfit}{\encodingdefault}{\sfdefault}{m}{sl}
\SetMathAlphabet{\mathsfit}{bold}{\encodingdefault}{\sfdefault}{bx}{n}

\def\emC{{C}}

\def\emP{{P}}

\usepackage[utf8]{inputenc} 
\usepackage[T1]{fontenc}    
\usepackage[colorlinks=true, linkcolor=purple, citecolor=green!60!black]{hyperref}       
\usepackage{url}            
\usepackage{booktabs}       
\usepackage{amsfonts}       
\usepackage{nicefrac}       
\usepackage{microtype}      
\usepackage[dvipsnames]{xcolor}
\usepackage[most]{tcolorbox}

\usepackage{tikz}
\usepackage{graphicx}
\usepackage{caption}
\usepackage{subcaption}
\usepackage{pifont}

\usepackage{algorithm}
\usepackage{algorithmic}

\usepackage{amsmath}
\usepackage{amssymb}
\usepackage{mathtools}
\usepackage{amsthm}
\usepackage{thmtools}
\usepackage{thm-restate}

\usepackage{tabularx}
\usepackage{enumitem}

\usepackage{multirow}
\usepackage{adjustbox}
\usepackage{wrapfig}

\makeatletter
\newcommand\footnoteref[1]{\protected@xdef\@thefnmark{\ref{#1}}\@footnotemark}
\makeatother

\usepackage[capitalize]{cleveref}
 \AtBeginDocument{%
    \crefname{figure}{Figure}{Figures}%
}
\newcommand{\eqlabelleft}{(}
\newcommand{\eqlabelright}{)}

\creflabelformat{equation}{\eqlabelleft#2#1#3\eqlabelright}

\theoremstyle{plain}

\theoremstyle{definition}

\theoremstyle{remark}

\newcommand{\appropto}{\mathrel{\vcenter{
  \offinterlineskip\halign{\hfil$##$\cr
    \propto\cr\noalign{\kern2pt}\sim\cr\noalign{\kern-2pt}}}}}

\newcommand{\rateinline}[2]{#1 \color{gray}{\scriptsize $\pm$ #2}}

\title{Semantic-aware Wasserstein Policy Regularization for Large Language Model Alignment}

\author{Byeonghu Na\textsuperscript{\textmd{1}}, Hyungho Na\textsuperscript{\textmd{2}}, Yeongmin Kim\textsuperscript{\textmd{1}}, Suhyeon Jo\textsuperscript{\textmd{1}}, HeeSun Bae\textsuperscript{\textmd{1}}, Mina Kang\textsuperscript{\textmd{1}}, \\
\textbf{Il-Chul Moon}\textsuperscript{\textmd{1,3}} \\
\textsuperscript{\textmd{1}}KAIST, \textsuperscript{\textmd{2}}UNIST, \textsuperscript{\textmd{3}}summary.ai\\
\texttt{byeonghu.na@kaist.ac.kr}, \quad \texttt{h.na@unist.ac.kr}\\
\texttt{\{alsdudrla10,suhyeonjo,cat2507,kasong13,icmoon\}@kaist.ac.kr}\\
}

\iclrfinalcopy 
\begin{document}

\maketitle

\begin{abstract}
Large language models (LLMs) are commonly aligned with human preferences using reinforcement learning from human feedback (RLHF). In this method, LLM policies are generally optimized through reward maximization with Kullback-Leibler (KL) divergence regularization of the reference policy. However, KL and its $f$-divergence variants only compare token probabilities at identical indices, failing to capture semantic similarity. We propose Wasserstein Policy Regularization (WPR), a semantic-aware regularization for the RLHF framework based on the entropy-regularized Wasserstein distance, which incorporates the geometry of the token space. The dual formulation of the distance expresses the regularization as penalty terms applied to the reward via optimal dual variables, which yield a tractable objective compatible with standard RL algorithms. Empirically, our method outperforms KL- and $f$-divergence-based baselines, demonstrating the benefits of semantic-aware policy distances for alignment. Our code is available at \url{https://github.com/aailab-kaist/WPR}.
\end{abstract}

\section{Introduction}
\label{sec:intro}


Large language models (LLMs) have achieved remarkable progress in recent years, powering applications ranging from conversational agents to code generation~\citep{touvron2023llama,achiam2023gpt,hui2024qwen2}. A central challenge in their deployment is aligning model behavior with human preferences. Reinforcement learning from human feedback (RLHF) has emerged as the dominant paradigm for alignment, where models are optimized to better reflect user intent~\citep{christiano2017deep,bai2022training,ouyang2022training}. The standard RLHF pipeline trains a reward model from human preference data and optimizes the LLM policy to maximize reward while remaining close to a supervised fine-tuned reference model~\citep{ouyang2022training}. Recent advances such as Direct Preference Optimization (DPO)~\citep{rafailov2023direct} and its variants~\citep{azar2024general,ethayarajh2024kto} follow a similar principle, reducing the preference learning to implicit reward maximization with reverse Kullback–Leibler (KL) regularization to maintain the reference policy.

The policy regularization by the KL divergence is widely adopted because the KL divergence can be computed directly from the token probabilities of the reference and the trained models, which is implemented as a penalty on the reward. While KL-based regularization is effective in practice, it exhibits known shortcomings; for example, the reverse KL tends to be mode-seeking, which reduces output diversity. Recent works have addressed these issues by replacing reverse KL with alternative $f$-divergences, such as $f$-DPO~\citep{wang2023beyond} and $\chi$PO~\citep{huang2024correcting}. However, these $f$-divergence-based constraints still measure policy discrepancy only by comparing token probabilities at identical indices, thereby ignoring semantic relationships between tokens.

To illustrate this limitation, we introduce a simple example in \cref{fig:motivation}. We consider a vocabulary \texttt{\{cat, kitten, dog, table\}} and compare a reference policy $\pi_{\text{ref}}$ and two learned policies, $\pi_1$ and $\pi_2$, in the context of next token selection when answering the question  ``\textit{What is in this image?}'' given a small cat image. In this example, $\pi_{\text{ref}}$, $\pi_1$, and $\pi_2$ assign high probability mass to \texttt{cat}, \texttt{kitten}, and \texttt{table}, respectively. Semantically, \texttt{(cat, kitten)} is more closely related than \texttt{(cat, table)}, so we would expect $\pi_{\text{ref}}$ to be closer to $\pi_1$ than $\pi_2$. However, KL values diverge due to the support mismatch, and other $f$-divergences such as Jensen-Shannon (JS) divergence assign the same distance to $\pi_1$ and $\pi_2$, failing to reflect semantic proximity.

To overcome this limitation, we introduce a new RLHF regularization framework based on Wasserstein distances, which we refer to as Wasserstein Policy Regularization (WPR). Unlike the KL and other $f$-divergences, the Wasserstein metric compares distributions by explicitly considering the geometry of the underlying token space. This enables flexible, user-defined cost functions that naturally encode semantic similarity between tokens. Additionally, it remains well-defined even when the support of two distributions does not overlap. In the context of language modeling, these properties are crucial because policies that assign high probability to semantically related tokens (e.g., \texttt{cat} and \texttt{kitten}) could likewise be regarded as similar. As illustrated in \cref{fig:motivation}, the Wasserstein distance properly identifies the reference policy $\pi_{\text{ref}}$ as being closer to $\pi_1$ than to $\pi_2$, thereby capturing semantic proximity that KL and other $f$-divergences fail to reflect. As a result, as shown in \cref{fig:temp}, the policy regularization with the Wasserstein distance achieves superior generation performance compared to KL and other $f$-divergence-based approaches, with experimental details provided in \cref{subsec:quant}.

Building on these properties, we propose a tractable optimization framework that leverages the entropy-regularized Wasserstein distance, i.e., Sinkhorn distance~\citep{cuturi2013sinkhorn}, as a semantic-aware policy regularizer. Computing this distance requires solving an entropic optimal transport problem; we recast it in the dual and show that the resulting dual variables represent the regularization penalty. This penalty can be incorporated into the reward as token-wise adjustments, analogous to standard KL-based regularization, making the formulation compatible with standard RL algorithms such as PPO~\citep{schulman2017proximal}. The optimal dual variables can be obtained efficiently via the Sinkhorn algorithm with modest overhead. Empirically, our approach outperforms KL- and $f$-divergence–based baselines, highlighting the effectiveness of semantic-aware policy distances for RLHF.

\begin{figure}[t]
    \centering
    \begin{minipage}[t]{0.7\textwidth}
        \vspace*{0pt}
        \centering
        \includegraphics[width=\linewidth]{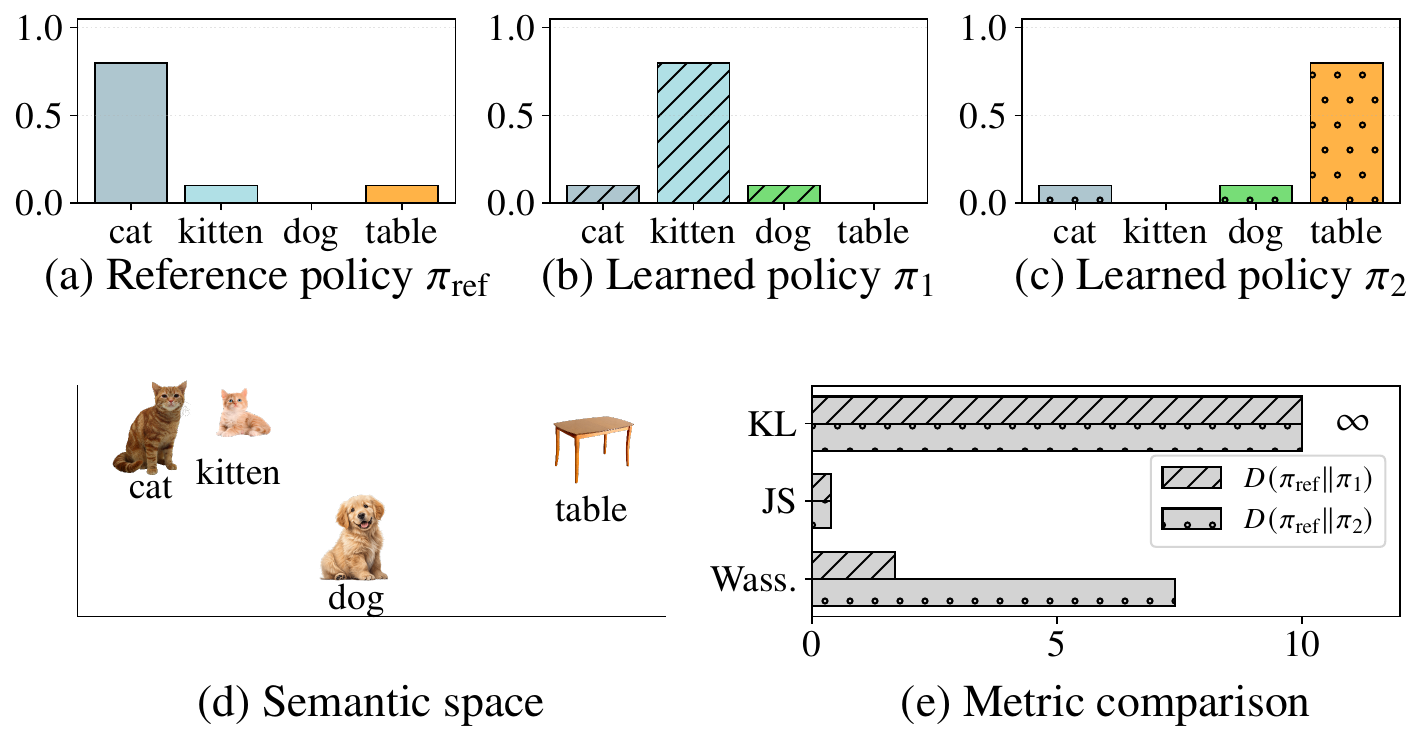}
        \caption{Motivating example for the Wasserstein distance in LLM policy comparison. (a-c) Probability distributions of the reference and learned policies. (d) Semantic space among tokens. (e) Comparison under different divergences, where Wasserstein distance captures semantic relationships that KL and JS divergences fail to reflect.}
        \label{fig:motivation}
    \end{minipage}
    \hfill
    \begin{minipage}[t]{0.28\textwidth}
        \vspace*{0pt}
        \centering
        \includegraphics[width=\linewidth]{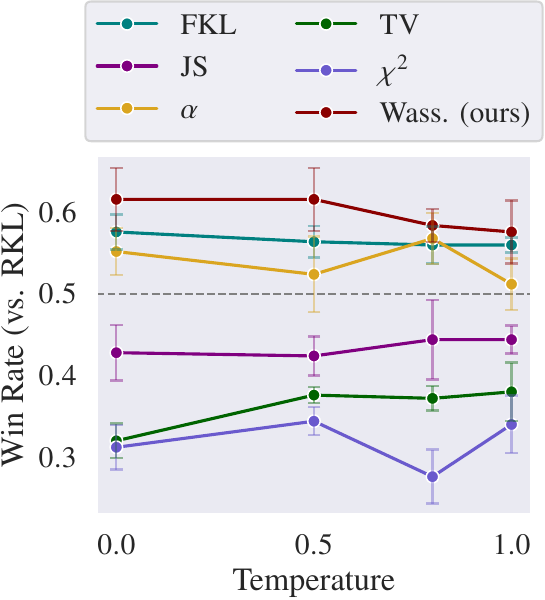}
        \caption{Win rates against KL-based regularization across sampling temperatures on dialogue generation with Gemma-2B, comparing $f$-divergences and our Wasserstein distance.}
        \label{fig:temp}
    \end{minipage}
\end{figure}


\section{Related Works}
\label{sec:rel}

\paragraph{Aligning Large Language Models} Traditional supervised fine-tuning (SFT) methods have been effective in language generation but shows limitations in aligning outputs with human preferences, such as sentiment~\citep{maas-EtAl:2011:ACL-HLT2011}, helpfulness~\citep{askell2021general}, harmlessness~\citep{gehman2020realtoxicityprompts}, and truthfulness \citep{lin2021truthfulqa}. RLHF has become the standard approach for preference alignment~\citep{stiennon2020learning,ouyang2022training}. It trains a reward model from human preference data and uses it to optimize the policy via reinforcement learning to better match human preferences~\citep{christiano2017deep,ziegler2019fine,bohm2019better}. This approach has enabled successful LLMs such as ChatGPT~\citep{achiam2023gpt}. Recent alternatives avoid explicit reward models, including RAFT~\citep{dong2023raft}, RRHF~\citep{yuan2023rrhf}, and DPO~\citep{rafailov2023direct}, which reformulate preference alignment as direct policy optimization.


\paragraph{Regularization for Policy Learning} Methods such as RLHF and DPO incorporate regularization by a reverse KL divergence during preference alignment to prevent the learned policy from deviating significantly from a reference model trained via SFT. While this constrains learning to remain close to the behavior of the reference model, the mode-seeking nature of reverse KL tends to limit output diversity~\citep{wiher2022decoding,khalifa2020distributional,perez2022red,glaese2022improving}. To address this limitation, studies such as $f$-DPO \citep{wang2023beyond} and $\chi$PO \citep{huang2024correcting} have been proposed. In parallel, other works~\citep{han2024f,kim2025preference} explore alternative divergences for directly matching the optimal policy, though our focus in this work is on regularization. However, $f$-divergence-based methods share a key limitation: they measure the distributional discrepancy solely based on probability values at identical indices, without reflecting the semantic relationships between tokens. In contrast, we propose a novel approach that leverages distance metrics from the Integral Probability Metric (IPM) \citep{muller1997integral}, such as Wasserstein distance \citep{adler2018banach,panaretos2019statistical}, to enable semantic-aware policy regularization.


\paragraph{Application of Wasserstein Distance} 
The Wasserstein distance and its variants, such as the Sinkhorn distance, have been widely applied across many machine learning domains, including generative modeling, robust optimization, and reinforcement learning~\citep{arjovsky2017wasserstein,sinha2017certifying,moskovitz2020efficient,song2023provably,cui2024sinkhorn}. For example, in generative modeling, Wasserstein GANs~\citep{arjovsky2017wasserstein} leverage the Wasserstein distance between the generator distribution and the data distribution to improve training stability and mitigate mode collapse. In robust optimization, adversarial training is formulated using Wasserstein balls around the data distribution to provide certified robustness~\citep{sinha2017certifying}. In reinforcement learning, the Wasserstein natural gradient aligns policy updates with the local optimal-transport geometry in behavioral policy optimization~\citep{moskovitz2020efficient}. \citet{song2023provably} explore trust-region policy optimization based on Wasserstein and Sinkhorn distance. Building on this line of work, we explore Wasserstein regularization for RLHF, enabling semantic-aware policy alignment.

\section{Preliminary}
\label{sec:prelim}

\subsection{Wasserstein distance}
\label{subsec:wasserstein}

The Wasserstein distance between two distributions $\pi$ and $\pi'$ is defined as
\begin{align}
    D_{\text{W}}(\pi || \pi') := \min_{\mP \in U(\pi,\pi')} \mathbb{E}_{(y,y') \sim \mP} \big [ c(y,y') \big ] = \min_{\mP \in U(\pi,\pi')} \langle \mP, \mC \rangle,
\end{align}
where $U(\pi,\pi') := \{ \mP \in \mathbb{R}_{+}^{d \times d} | \mP \bm{1}_d = \pi, \mP^\top \bm{1}_d = \pi' \}$ is the set of couplings between $\pi$ and $\pi'$, $\mC \in \mathbb{R}_{+}^{d \times d}$ is the cost matrix with entries $\mC_{y,y'} := c(y,y') \geq 0$, $\langle \cdot , \cdot \rangle$ denotes the Frobenius inner product, and $d$ is the cardinality of the outcome space.

To obtain a smooth and computationally tractable approximation to the Wasserstein distance, an entropy regularization term is added to the optimal transport objective, yielding the entropy-regularized Wasserstein distance, also known as the Sinkhorn distance~\citep{cuturi2013sinkhorn}:
\begin{align}
    D_{\tilde{\text{W}}}(\pi || \pi') := \min_{\mP \in U(\pi,\pi')} \left \{ \langle \mP, \mC \rangle - \frac{1}{\lambda} \mathcal{H}(\mP) \right \},
\end{align}
where $\lambda$ is an entropy regularization hyperparameter, and $\mathcal{H}(\mP) := -\sum_{i=1}^{d}\sum_{j=1}^{d} \emP_{ij} (\log \emP_{ij} - 1)$ is the entropy regularization term, equivalent to the Shannon entropy up to an additive constant.

While the Wasserstein distance directly relies on the Kantorovich dual formulation of optimal transport, the Sinkhorn distance arises from the dual of its entropically regularized variant~\citep{villani2008optimal,peyre2019computational}:
\begin{align}
    D_{{\text{W}}}(\pi || \pi') & = \max_{\bm{\phi}, \bm{\psi}} \left \{ \sum_{i=1}^{d} \phi_i \pi_i + \sum_{j=1}^{d} \psi_j \pi_j' ~ \Bigg | ~ \phi_i + \psi_j \leq \emC_{ij} ~ \forall i,j \right \}, \\
    D_{\tilde{\text{W}}}(\pi || \pi') & = \max_{\bm{\phi}, \bm{\psi}} \left  \{ \sum_{i=1}^{d} \phi_i \pi_i + \sum_{j=1}^{d} \psi_j \pi_j' - \frac{1}{\lambda} \sum_{i=1}^{d}\sum_{j=1}^{d} \exp \left (\lambda (\phi_i + \psi_j - \emC_{ij}) \right ) \right \},
\end{align}
where $\bm{\phi}$ and $\bm{\psi}$ are the dual variables. In the Wasserstein case with the Euclidean cost, the dual variables reduce to a single 1-Lipschitz function, which is typically parameterized by a critic network and optimized with gradient-based methods~\citep{arjovsky2017wasserstein}. In contrast, the entropy-regularized formulation yields dual optimality conditions corresponding to matrix scaling factors, which can be computed efficiently by the Sinkhorn-Knopp algorithm~\citep{sinkhorn1967concerning} as closed-form iterations alternating between row and column normalization~\citep{cuturi2013sinkhorn,cuturi2014fast}.

The entropy-regularized Wasserstein distance produces smoother and denser couplings between distributions, and it converges to the Wasserstein distance as $\lambda \to \infty$. Moreover, compared to the unregularized Wasserstein distance, the Sinkhorn distance can be computed more efficiently, incurring substantially less computational overhead. Since our setting requires computing next-token predictive distributions conditioned on various prompts and partial responses, we employ the entropic regularization variant rather than the critic-based Wasserstein distance, as the former admits the closed-form iterations.\footnote{In preliminary experiments, we explored the critic-based Wasserstein distance but found that the resulting policy regularization was insufficient, leading to suboptimal performance.}

\subsection{Reinforcement Learning from Human Preferences (RLHF)}
\label{subsec:rlhf}

Our goal is to align an autoregressive LLM, denoted as $\pi_{\bm{\theta}}(\rvy|\rvx)$ where $\rvx$ is a user prompt and $\rvy$ is a response, with human preferences through reinforcement learning (RL). The RLHF procedure consists of three main stages. First, we perform supervised fine-tuning (SFT) to obtain a reference model $\pi_{\text{ref}}$, which serves as the initial aligned model. Second, we train a reward model $r(\rvx,\rvy)$ on a preference dataset, enabling the estimation of scalar rewards for responses $\rvy$ given prompts $\rvx$. Finally, using both the reference model $\pi_{\text{ref}}$ and the reward model $r$, we optimize the following objective to fine-tune the language model $\pi_{\bm{\theta}}$:
\begin{align}
    \max_{\pi_{\bm{\theta}}} \mathcal{J} (\pi_{\bm{\theta}} ; \pi_{\text{ref}} ) := & \mathbb{E}_{\rvx \sim \mathcal{D}} \left [ \mathbb{E}_{\rvy \sim \pi_{\bm{\theta}} (\rvy | \rvx)} \left [ r(\rvx,\rvy) \right ]   - \beta D \left ( \pi_{\bm{\theta}} (\rvy | \rvx) || \pi_{\text{ref}} (\rvy | \rvx) \right ) \right ], \label{eq:rlhf}
\end{align}
where $D$ denotes a policy divergence, $\beta$ is a policy regularization hyperparameter, and $\mathcal{D}$ is the prompt dataset. This objective encourages the policy to generate responses that maximize reward while remaining close to the reference model.

In most of the previous works, the divergence $D$ is instantiated as the (reverse) KL divergence:
\begin{align}
    & \max_{\pi_{\bm{\theta}}} \mathcal{J}_{\text{KL}} (\pi_{\bm{\theta}} ; \pi_{\text{ref}} ) := \mathbb{E}_{\rvx \sim \mathcal{D}} \left [ \mathbb{E}_{\rvy \sim \pi_{\bm{\theta}} (\rvy | \rvx)} \left [ r(\rvx,\rvy) \right ]   - \beta D_{\text{KL}} \left ( \pi_{\bm{\theta}} (\rvy | \rvx) || \pi_{\text{ref}} (\rvy | \rvx) \right ) \right ] \label{eq:rlhf_kl1} \\
    & = \mathbb{E}_{\rvx} \left [\sum_{n=1}^{N} \mathbb{E}_{y_n \sim \pi_{\bm{\theta}} (y_n | \rvx, \rvy_{1:n-1})} \left [ R(\rvx,\rvy_{1:n}) \right ]   - \beta  \sum_{n=1}^N D_{\text{KL}} \left ( \pi_{\bm{\theta}} (y_{n} | \rvx, \rvy_{1:n-1}) || \pi_{\text{ref}} (y_{n} | \rvx, \rvy_{1:n-1}) \right ) \right ] \label{eq:rlhf_kl2} \\
    & = \mathbb{E}_{\rvx} \left [ \sum_{n=1}^{N} \mathbb{E}_{y_n \sim \pi_{\bm{\theta}} (y_n | \rvx, \rvy_{1:n-1})} \left [ R(\rvx,\rvy_{1:n}) - \beta \log  \frac{\pi_{\bm{\theta}} (y_n | \rvx, \rvy_{1:n-1})}{\pi_{\text{ref}} (y_n | \rvx, \rvy_{1:n-1})} \right ] \right ], \label{eq:rlhf_kl3}
\end{align}
where $D_{\text{KL}}(\pi(y_n)||\pi'(y_n)) := \mathbb{E}_{y_n \sim \pi(y_n)} \left [ \log \frac{\pi(y_n)}{\pi'(y_n)} \right ]$, $N$ is the sequence length of $\rvy$, and $R(\rvx,\rvy_{1:n}) = r(\rvx,\rvy_{1:N})$ for $n = N$, and $0$ otherwise. As shown in \cref{eq:rlhf_kl3}, the KL regularization term $D_{\text{KL}}$ can be rewritten as the expectation of the log-ratio between the two policies, which allows standard RL algorithms such as PPO to be applied for optimization. Furthermore, several works~\citep{wang2023beyond,huang2024correcting} have generalized the KL divergence to other $f$-divergences and developed tractable optimization formulations accordingly.

However, as mentioned in the Introduction, KL or other $f$-divergence measures compare policies solely by token-level probability differences at identical indices, without accounting for the underlying semantic structure of tokens. This limitation prevents them from fully capturing meaningful distributional differences in language generation. To address this, we replace the divergence term with the Wasserstein distance, more precisely the Sinkhorn distance, which naturally incorporates semantic information, and we develop a tractable optimization framework for this objective.

\begin{figure}[!ht]
    \centering
    \includegraphics[width=0.92\linewidth]{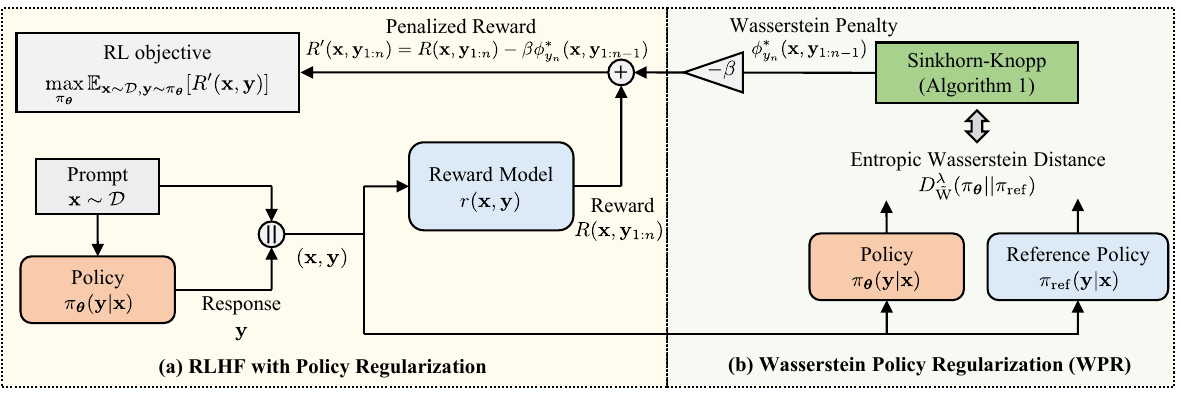}
    \caption{Overview of RLHF with Wasserstein Policy Regularization. (a) Standard RLHF with a policy regularization penalty. (b) Our proposed Wasserstein policy regularization, where the penalty is computed from the optimal dual variables obtained via the Sinkhorn-Knopp algorithm.}
    \label{fig:overview}
\end{figure}

\section{Method: Wasserstein Policy Regularization}
\label{sec:method}

\subsection{RLHF Objective with Wasserstein Policy Regularization}
\label{subsec:objective}

In this section, we propose Wasserstein Policy Regularization (WPR), which regularizes LLM policies in RLHF using the Wasserstein distance as the statistical distance between policies. We formulate the Wasserstein-regularized RLHF objective by replacing the token-level KL divergence regularization term in \cref{eq:rlhf_kl2} with a Wasserstein regularization term:
\begin{align}
    & \max_{\pi_{\bm{\theta}}} \mathcal{J}_{\text{W}} (\pi_{\bm{\theta}} ; \pi_{\text{ref}} ) :=  \label{eq:rlhf_w} \\
    & \mathbb{E}_{\rvx \sim \mathcal{D}} \left [\sum_{n=1}^{N} \mathbb{E}_{y_n \sim \pi_{\bm{\theta}} (y_n | \rvx, \rvy_{1:n-1})} \left [ R(\rvx,\rvy_{1:n}) \right ]   - \beta  \sum_{n=1}^N D_{\text{W}} \left ( \pi_{\bm{\theta}} (y_{n} | \rvx, \rvy_{1:n-1}) || \pi_{\text{ref}} (y_{n} | \rvx, \rvy_{1:n-1}) \right ) \right ] \nonumber
\end{align}
Here, $D_{\text{W}} \left ( \pi_{\bm{\theta}} (y_{n} | \rvx, \rvy_{1:n-1}) || \pi_{\text{ref}} (y_{n} | \rvx, \rvy_{1:n-1}) \right ) := \min_{\mP^{(n)} \sim U_n( \pi_{\bm{\theta}}, \pi_{\text{ref}} )}<\mP^{(n)}, \mC>$ where $ U_n( \pi_{\bm{\theta}}, \pi_{\text{ref}}) := \left \{ \mP^{(n)} \in \mathbb{R}_{+}^{d\times d} | \mP^{(n)} \bm{1}_d = \pi_{\bm{\theta}} ( \cdot | \rvx, \rvy_{1:n-1}), {\mP^{(n)}}^{\top} \bm{1}_d = \pi_{\text{ref}} ( \cdot | \rvx, \rvy_{1:n-1}) \right \}$, $\mC \in \mathbb{R}_{+}^{d\times d}$ is the cost matrix, $d$ is the token dictionary size, and $N$ is the sequence length of $\rvy$. It should be noted that $\mP^{(n)}$ depends on $(\rvx,\rvy_{1:n-1})$, but we omit these input terms for simplicity.

The next step is to formulate the Wasserstein distance between two token-level discrete distributions so as to obtain a tractable optimization objective for $\pi_{\bm{\theta}}$. However, computing the exact Wasserstein distance requires solving a linear program, which quickly becomes intractable when the distributional support is large~\citep{kuhn2019wasserstein}. As discussed in \cref{subsec:wasserstein}, a widely used approximation is to introduce entropic regularization into the transport problem, referred to as the entropy-regularized Wasserstein distance or Sinkhorn distance~\citep{cuturi2013sinkhorn}:
\begin{align}
    & \max_{\pi_{\bm{\theta}}} \mathcal{J}_{\tilde{\text{W}}} (\pi_{\bm{\theta}} ; \pi_{\text{ref}} ) :=  \label{eq:rlhf_entropy_w}  \\
    & \mathbb{E}_{\rvx \sim \mathcal{D}} \left [\sum_{n=1}^{N} \mathbb{E}_{y_n \sim \pi_{\bm{\theta}} (y_n | \rvx, \rvy_{1:n-1})} \left [ R(\rvx,\rvy_{1:n}) \right ]   - \beta  \sum_{n=1}^N D_{\tilde{\text{W}}}^{\lambda} \left ( \pi_{\bm{\theta}} (y_{n} | \rvx, \rvy_{1:n-1}) || \pi_{\text{ref}} (y_{n} | \rvx, \rvy_{1:n-1}) \right ) \right ] \nonumber
\end{align}
\begin{align}
    \text{where }{D}_{\tilde{\text{W}}}^{\lambda} \left ( \pi_{\bm{\theta}} ( \cdot| \rvx, \rvy_{1:n-1}) || \pi_{\text{ref}} ( \cdot | \rvx, \rvy_{1:n-1}) \right ) := \min_{\mP^{(n)} \in U_n} \left \{ \langle \mP^{(n)}, \mC \rangle - \frac{1}{\lambda} \mathcal{H}(\mP^{(n)}) \right \}. \label{eq:sinkhorn}
\end{align}
We refer to this objective, $\mathcal{J}_{\tilde{\text{W}}} (\pi_{\bm{\theta}} ; \pi_{\text{ref}} )$, as the entropic Wasserstein-regularized RLHF objective.
We now derive the dual problem from the regularized primal transportation problem in \cref{eq:sinkhorn}. Specifically, we introduce the Lagrangian function $\mathcal{L}$ corresponding to \cref{eq:sinkhorn}. 
\begin{align}
     \mathcal{L}(\mP^{(n)}, \bm{\phi}, \bm{\psi}) := & \sum_{i=1}^{d}\sum_{j=1}^{d} \left ( P_{ij}^{(n)} C_{ij} + \frac{1}{\lambda} P_{ij}^{(n)} (\log P_{ij}^{(n)} - 1)\right ) \nonumber \\
     & + \sum_{i=1}^{d} \phi_i ( [\pi_{\bm{\theta}}]_i - \sum_{k=1}^d P_{ik}^{(n)}) + \sum_{j=1}^{d} \psi_j ( [\pi_{\text{ref}}]_j - \sum_{k=1}^d P_{kj}^{(n)}), \label{eq:lagrangian}
\end{align}
where $\{\phi_i\}_{i=1}^d$ and $\{\psi_j\}_{j=1}^d$ are the Lagrange multipliers, introduced to enforce the marginal constraints in $U_n$; specifically, they ensure that the row sums of $\mP^{(n)}$ match $\pi_{\bm{\theta}}(\cdot \mid \rvx, \rvy_{1:n-1})$ and the column sums match $\pi_{\text{ref}}(\cdot \mid \rvx, \rvy_{1:n-1})$. Similar to $\mP^{(n)}$, the dual variables $\bm{\phi}$ and $\bm{\psi}$ are functions of $(\rvx,\rvy_{1:n-1})$, but we omit their input terms for brevity unless this causes ambiguity. Based on this Lagrangian, the corresponding dual problem is given by
\begin{align}
    \max_{\phi,\psi} \sum_{i=1}^{d} \phi_i [\pi_{\bm{\theta}}]_i + \sum_{j=1}^{d} \psi_j [\pi_{\text{ref}}]_j - \sum_{i=1}^{d} \sum_{j=1}^{d} \frac{1}{\lambda} \exp (\lambda (\phi_i + \psi_j - C_{ij} )), \label{eq:eot_dual}
\end{align}
which is derived in \cref{app_subsec:derivation}. With strong duality and formulation of the primal solution, we can find the optimal solutions by \cref{thm:solution}~\citep{cuturi2014fast}.
\begin{restatable}{proposition}{thma}
\label{thm:solution}
    \citep{cuturi2014fast} There exists a pair of vectors $(\mathbf{u}, \mathbf{v}) \in \mathbb{R}^{d}_{+} \times \mathbb{R}^{d}_{+}$ such that the optimal solutions of $\mP^{(n)}$, $\bm{\phi}$, and $\bm{\psi}$ are respectively given by
    \begin{align}
        {\mP^{(n)}}^* = \mathrm{diag}(\mathbf{u}) \exp (-\lambda \mC) \mathrm{diag}(\mathbf{v}), \quad \bm{\phi}^* = -\frac{1}{\lambda} \log (\mathbf{u}), \quad \bm{\psi}^* = -\frac{1}{\lambda} \log (\mathbf{v}).
    \end{align}
\end{restatable}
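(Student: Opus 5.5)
The plan is to combine first-order optimality of the Lagrangian in \cref{eq:lagrangian} with strong duality for the strictly convex primal problem in \cref{eq:sinkhorn}.

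\textbf{Existence and uniqueness of the primal optimum.} The feasible set $U_n$ is nonempty, since the product coupling $\pi_{\bm{\theta}}\pi_{\text{ref}}^\top$ lies in it. It is also compact and convex. The objective $\langle \mP, \mC\rangle + \frac{1}{\lambda}\sum_{ij} P_{ij}(\log P_{ij}-1)$ is continuous and strictly convex. Hence a unique minimizer ${\mP^{(n)}}^*$ exists.

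\textbf{Strong duality and KKT conditions.} The constraints are affine. Slater's condition holds in the relative-interior sense, because the product coupling is strictly positive on the supports of $\pi_{\bm{\theta}}$ and $\pi_{\text{ref}}$. (If some entries of a marginal are zero, I would restrict to the supports, or note that softmax policies have full support; this is the only technical wrinkle.) Strong duality therefore holds and dual maximizers $(\bm{\phi}^*,\bm{\psi}^*)$ of \cref{eq:eot_dual} exist. I expect the main obstacle to be establishing dual attainment, which is needed in addition to the zero duality gap. I would obtain it by exhibiting the scaling vectors directly via the Sinkhorn--Knopp theorem~\citep{sinkhorn1967concerning}, as follows.

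\textbf{Stationarity.} Setting $\partial \mathcal{L}/\partial P_{ij}=0$ gives
\begin{align*}
C_{ij} + \tfrac{1}{\lambda}\log P_{ij} - \phi_i - \psi_j = 0
\quad\Longrightarrow\quad
P_{ij} = e^{\lambda\phi_i}\, e^{-\lambda C_{ij}}\, e^{\lambda\psi_j}.
\end{align*}
The inner minimization of $\mathcal{L}$ over $\mP$ is attained exactly at this point. Substituting it back yields the dual function in \cref{eq:eot_dual}. The first-order conditions of \cref{eq:eot_dual} in $\bm{\phi}$ and $\bm{\psi}$ are then precisely the marginal constraints $\mP\1_d=\pi_{\bm{\theta}}$ and $\mP^\top\1_d=\pi_{\text{ref}}$.

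\textbf{Existence of the scaling vectors.} It remains to show that positive vectors $\mathbf{u},\mathbf{v}$ exist making $\mathrm{diag}(\mathbf{u})\mK\,\mathrm{diag}(\mathbf{v})$ have the prescribed marginals, where $\mK=\exp(-\lambda\mC)$. Since $\mK$ has strictly positive entries, the Sinkhorn--Knopp theorem guarantees such a diagonal scaling, unique up to $\mathbf{u}\mapsto t\mathbf{u}$, $\mathbf{v}\mapsto \mathbf{v}/t$. Alternatively, one can argue that the dual objective in \cref{eq:eot_dual} is concave and coercive modulo the shift $(\bm{\phi}+c,\bm{\psi}-c)$, so it attains its maximum.

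\textbf{Identifying the optimizers.} Writing the resulting scalings as $\mathbf{u}=e^{\lambda\bm{\phi}^*}$ and $\mathbf{v}=e^{\lambda\bm{\psi}^*}$, the induced $\mP$ is primal feasible and together with $(\bm{\phi}^*,\bm{\psi}^*)$ satisfies the KKT conditions. By convexity it is therefore optimal, and by strict convexity it equals ${\mP^{(n)}}^*$. This gives ${\mP^{(n)}}^*=\mathrm{diag}(\mathbf{u})\exp(-\lambda\mC)\mathrm{diag}(\mathbf{v})$.

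\textbf{Sign convention.} The proposition writes $\bm{\phi}^* = -\frac{1}{\lambda}\log\mathbf{u}$, which corresponds to the opposite sign convention for the multipliers (i.e., $\mathbf{u}=e^{-\lambda\bm{\phi}}$). Under the Lagrangian as written in \cref{eq:lagrangian}, one has $\mathbf{u}=e^{\lambda\bm{\phi}^*}$, so I would either adopt the convention that matches the stated formula or note that the two versions differ by this sign convention.
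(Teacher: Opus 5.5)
Your proposal is correct and follows the same route as the paper's proof: stationarity of the Lagrangian gives the Gibbs form $P_{ij}=e^{\lambda\phi_i}e^{-\lambda C_{ij}}e^{\lambda\psi_j}$, the Sinkhorn--Knopp theorem supplies positive scalings with the prescribed marginals, and the dual variables are read off by taking logarithms, with your strict-convexity, KKT-sufficiency and full-support remarks filling in details the paper leaves implicit. Your sign remark is right and should be stated more firmly: with \cref{eq:lagrangian} and \cref{eq:eot_dual} as written one gets $\mathbf{u}=e^{\lambda\bm{\phi}^*}$, i.e.\ $\bm{\phi}^*=\frac{1}{\lambda}\log\mathbf{u}$ (the paper's own proof defines $\mathbf{u}:=\exp(\lambda\bm{\phi})$ and then asserts $\bm{\phi}^*=-\frac{1}{\lambda}\log\mathbf{u}$), so the minus sign in the statement is an error rather than a free convention, because flipping the multiplier signs would also change \cref{eq:eot_dual} and the sign with which $\bm{\phi}^*$ enters the penalty in \cref{thm:objective}.
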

We present the proof in \cref{app_subsec:thm_solution}. Note that in our formulation, $\exp$ denotes the element-wise exponential applied to each entry of $\mC$. In addition, for any real value $t$, the pair of dual variables, $(\bm{\phi}+t\mathbf{1}_d, \bm{\psi}-t\mathbf{1}_d)$ yields the same dual objective value. Hence, the dual optimal solutions are not unique but are determined only up to an additive constant. However, as shown in \cref{thm:objective}, when formulating the policy optimization problem based on this optimal solution, the additive term remains constant with respect to the policy and can therefore be ignored, yielding an equivalent problem.

By strong duality, substituting the optimal primal and dual variables obtained in \cref{thm:solution} into the objective in \cref{eq:eot_dual} yields an expression of the entropy-regularized Wasserstein distance in terms of the optimal variables. Plugging this result back into the RLHF formulation in  \cref{eq:rlhf_entropy_w}, we obtain a tractable optimization problem, as stated in \cref{thm:objective}.
\begin{restatable}{theorem}{thmb}
\label{thm:objective}
    Let $\bm{\phi}^*(\rvx,\rvy_{1:n-1})$ denote the optimal dual variables of the entropic optimal transport problem in \cref{eq:eot_dual}. Then, the entropic Wasserstein-regularized RLHF in \cref{eq:rlhf_entropy_w} can be equivalently written as a reward maximization problem with an additional penalty, induced by $\bm{\phi}^*$, i.e.,
    \begin{align}
        \mathcal{J}_{\tilde{\text{W}}}(\pi_{\bm{\theta}}; \pi_{\text{ref}}) = \mathbb{E}_{\rvx \sim \mathcal{D}} \left [ \sum_{n=1}^{N} \mathbb{E}_{y_n \sim \pi_{\bm{\theta}} (y_n | \rvx, \rvy_{1:n-1})} \left [ R(\rvx,\rvy_{1:n}) - \beta \phi_{y_n}^*(\rvx,\rvy_{1:n-1}) \right ]  \right ]  + \mathcal{C} , \label{eq:rlhf_w_penalty}
    \end{align}
    where $\mathcal{C}$ is a constant with respect to $\pi_{\bm{\theta}}$.
\end{restatable}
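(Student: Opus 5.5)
The plan is to evaluate the Sinkhorn distance at each prefix $(\rvx,\rvy_{1:n-1})$ using strong duality together with the closed-form optimizers of \cref{thm:solution}, and then substitute the result into \cref{eq:rlhf_entropy_w}.

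\textbf{Step 1: evaluate the dual at its optimum.} By strong duality for the entropic OT problem (strictly convex primal, affine constraints, feasible interior), $D_{\tilde{\text{W}}}^{\lambda}$ equals the optimal value of \cref{eq:eot_dual}. At the dual optimum $(\bm{\phi}^*,\bm{\psi}^*)$ given by \cref{thm:solution}, we have
\[
\exp\big(\lambda(\phi_i^*+\psi_j^*-C_{ij})\big) = u_i^{-1}v_j^{-1}e^{-\lambda C_{ij}}.
\]
This is where I must be careful with signs: the stationarity condition of \cref{eq:lagrangian} gives $P_{ij}=\exp(\lambda(\phi_i+\psi_j-C_{ij}))$. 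Under the stated convention $\bm\phi^*=-\frac1\lambda\log\mathbf u$, this reads $P^*_{ij}=e^{-\lambda C_{ij}}/(u_iv_j)$, so the scaling vectors are effectively reciprocals. I would reconcile this by reading \cref{thm:solution} consistently with the Lagrangian. Either way, the double sum equals $\sum_{ij}P^*_{ij}$, which is $1$ because ${\mP^{(n)}}^*$ has marginals $\pi_{\bm\theta}$ and $\pi_{\text{ref}}$, both probability vectors. Hence
\[
D_{\tilde{\text{W}}}^{\lambda} = \sum_i \phi^*_i[\pi_{\bm\theta}]_i + \sum_j\psi^*_j[\pi_{\text{ref}}]_j - \frac{1}{\lambda},
\]
and the first term is $\mathbb{E}_{y_n\sim\pi_{\bm\theta}(\cdot\mid\rvx,\rvy_{1:n-1})}[\phi^*_{y_n}]$.

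\textbf{Step 2: substitute into the objective.} Plugging into \cref{eq:rlhf_entropy_w} and using linearity, the penalty $-\beta\phi^*_{y_n}$ moves inside the same expectation as $R(\rvx,\rvy_{1:n})$. This yields \cref{eq:rlhf_w_penalty}, with residual
\[
\mathcal{C}=\mathbb{E}_{\rvx}\Big[\sum_n\Big(-\beta\sum_j\psi^*_j[\pi_{\text{ref}}]_j+\beta/\lambda\Big)\Big],
\]
where the $\psi^*$ terms sit inside the trajectory expectation.

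\textbf{Step 3 (main obstacle): the meaning of "constant."} The $-\beta/\lambda$ part is trivially constant. The $\psi^*$ part depends on $\pi_{\bm\theta}$, both through the optimizer and through the prefix distribution. I would handle this in two ways.

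First, I would use the additive-shift freedom $(\bm\phi+t\mathbf 1,\bm\psi-t\mathbf 1)$ noted after \cref{thm:solution}. For each prefix, pick the normalization so that $\sum_j\psi^*_j[\pi_{\text{ref}}]_j=0$. This is always possible, since shifting changes that sum by $-t$. Shifting $\bm\phi^*$ correspondingly changes nothing else in the argument. With this normalization $\mathcal C=\beta N/\lambda$ (in expectation over sequence length), which is genuinely constant in $\pi_{\bm\theta}$.

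Second, as a consistency check, I would apply the envelope theorem to $D_{\tilde{\text{W}}}^\lambda$ as a function of $\pi_{\bm\theta}$. Its gradient with respect to $[\pi_{\bm\theta}]_i$ is $\phi^*_i$, with $\bm\phi^*,\bm\psi^*$ held fixed. So treating $\bm\phi^*$ as a stop-gradient reward penalty gives the correct policy gradient, matching the PPO usage.

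The only genuinely delicate point is stating which normalization of the dual pair makes the remaining term exactly constant. I would present the proof with the normalization choice made explicit.
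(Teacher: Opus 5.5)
Your argument is correct and follows the paper's route: strong duality, substituting the optimal dual pair into \cref{eq:eot_dual}, and folding $\langle \bm{\phi}^*, \pi_{\bm{\theta}}\rangle$ into the per-token expectation. The difference lies in the residual. The paper simply declares $\sum_j \psi_j^*[\pi_{\text{ref}}]_j - \frac{1}{\lambda}\sum_{i,j}\exp(\lambda(\phi_i^*+\psi_j^*-C_{ij}))$ to be a constant $\mathcal{C}$ and does not address the dependence of $\bm{\psi}^*$ on $\pi_{\bm{\theta}}$. You correctly note that the $\bm{\psi}^*$ term depends on $\pi_{\bm{\theta}}$, both through the optimizer (including its free additive shift) and through the prefix distribution. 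Your gauge-fixing is exactly the step needed to make the theorem true. The statement should therefore be read as referring to that normalized representative of $\bm{\phi}^*$; the raw Sinkhorn output of \cref{alg:penalty} need not satisfy it.

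Two refinements. First, choose the shift so that $\langle \bm{\psi}^*, \pi_{\text{ref}}\rangle = 1/\lambda$ rather than $0$. Then $D^{\lambda}_{\tilde{\text{W}}} = \mathbb{E}_{y_n\sim\pi_{\bm{\theta}}}[\phi^*_{y_n}]$ holds exactly at every prefix, so $\mathcal{C}=0$. This removes your hedge about $\beta N/\lambda$ ``in expectation over sequence length'', which is not constant in $\pi_{\bm{\theta}}$ when the response length depends on the policy. Second, your envelope check only holds per prefix. A prefix-dependent shift of $\bm{\phi}^*$ still changes the sequence-level policy gradient through the score of the prefix distribution, so it is the gauge-fixing, not the envelope argument, that carries the proof.

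The sign inconsistency you flagged is real. The paper's proof defines $\mathbf{u}:=\exp(\lambda\bm{\phi})$ and then writes $\bm{\phi}^*=-\frac{1}{\lambda}\log\mathbf{u}$. As you observe, this theorem only uses $\sum_{i,j}P^*_{ij}=1$, so the inconsistency does not affect it.
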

The proof is provided in \cref{app_subsec:thm_objective}. Since the objective $\mathcal{J}_{\tilde{\text{W}}}$ of \cref{eq:rlhf_w_penalty} can be expressed as the sum of token-wise rewards over sampled response sequences, the entropic Wasserstein-regularized RLHF problem, \cref{eq:rlhf_entropy_w}, can be optimized using standard RL methods such as PPO~\citep{schulman2017proximal}. The full RLHF training algorithm is provided in \cref{alg:full} of \cref{app_sec:RLHF_algorithm}.

\subsection{Computation of Wasserstein Penalty}
\label{subsec:penalty}

\begin{wrapfigure}[12]{r}{0.46\textwidth}
    \vspace{-2.3em}
    \begin{minipage}{0.46\textwidth}
    \begin{algorithm}[H]
        \caption{Computation of Wasserstein Penalty via Sinkhorn-Knopp Algorithm}
        \label{alg:penalty}
        \begin{algorithmic}[1]
            \REQUIRE Learned policy $\pi_{\bm{\theta}}(\cdot|\rvx, \rvy_{1:n-1})$, Reference policy $\pi_{\text{ref}}(\cdot|\rvx, \rvy_{1:n-1})$, Cost $\mC$
            \STATE $\mathbf{u} \leftarrow \mathbf{1}_{d}$, $\mathbf{v} \leftarrow \mathbf{1}_{d}$, $\mK \leftarrow \exp(-\lambda \mC)$
            \WHILE{converged}
                \STATE $\mathrm{diag}(\mathbf{u}) \leftarrow \pi_{\bm{\theta}} ~ ./ ~ \mK (\mathrm{diag}(\mathbf{v}))$
                \STATE $\mathrm{diag}(\mathbf{v}) \leftarrow \pi_{\text{ref}} ~ ./ ~ \mK^{\top} (\mathrm{diag}(\mathbf{u}))$
            \ENDWHILE
            \STATE $\bm{\phi} \leftarrow - \frac{1}{\lambda} \log (\mathbf{u})$
            \ENSURE Dual variable $\bm{\phi}$
        \end{algorithmic}
    \end{algorithm}
    \end{minipage}
\end{wrapfigure}

As shown in the objective of \cref{eq:rlhf_w_penalty}, computing the Wasserstein penalty requires obtaining the optimal dual solution $\bm{\phi}$ of the entropic optimal transport problem. To this end, we need to compute the vectors $\mathbf{u}$ and $\mathbf{v}$ introduced in \cref{thm:solution}. These can be efficiently obtained by applying the Sinkhorn-Knopp algorithm~\citep{sinkhorn1967concerning} for the matrix scaling problem, as described in \cref{alg:penalty}~\citep{cuturi2013sinkhorn,cuturi2014fast}.

Specifically, as shown in \cref{thm:solution}, the optimal primal solution, ${\mP^{(n)}}^*$, can be expressed as the product of the positive matrix, $\exp (-\lambda \mC)$, and two diagonal matrices, $\mathrm{diag}(\mathbf{u})$ and $\mathrm{diag}(\mathbf{v})$, with positive entries. Since ${\mP^{(n)}}^*$ is a transportation map, it must be doubly stochastic. Consequently, solving for $\mathbf{u}$ and $\mathbf{v}$ in \cref{thm:solution} reduces to a matrix scaling problem, which can be solved using the Sinkhorn-Knopp algorithm. This algorithm iteratively rescales the rows and columns of ${\mP^{(n)}}^*$ to match the target marginals $\pi_{\bm{\theta}}$ and $\pi_{\text{ref}}$, respectively (lines 3-4 in \cref{alg:penalty}, where $./$ denotes element-wise division).

\paragraph{Practical Consideration}


In practice, the Sinkhorn-Knopp algorithm can be directly applied, but it requires iterative matrix multiplications with the exponential of the cost matrix $\mK := \exp (-\lambda \mC) \in \mathbb{R}^{d \times d}_{+}$. This incurs $\mathcal{O}(d^2)$ computational complexity with respect to the dictionary size $d$, leading to increased time and memory consumption. To mitigate this, we employ two forms of truncation.

First, during pre-computation of the cost matrix, we apply a \textit{nearest-$k_1$ truncation}. For each token, distances are computed only to its $k_1$ nearest neighbors. Entries outside this neighborhood are set to zero in $\mK$, which is equivalent to assigning infinite distance. This yields a sparse $\mK$, enabling efficient sparse matrix multiplications that reduce both storage and computation.
Second, during the Sinkhorn-Knopp algorithm, we apply a \textit{top-$k_2$ truncation}. The distributions $\pi_{\bm{\theta}}$ and $\pi_{\text{ref}}$ are truncated to their top-$k_2$ indices together with the actually sampled index, while the remaining probability mass is aggregated into a dummy index. This reduces the effective support size from $d$ to at most $2k_2+2$, lowering the complexity from $\mathcal{O}(d^2)$ to $\mathcal{O}(k_2^2)$.
See \cref{app_subsec:sinkhorn_detail} for details of both truncations.
Together, these truncations substantially reduce the computational cost of the entropic Wasserstein distance, with training time per step increasing by only 2.5\% compared to standard KL regularization.

\section{Experiments}



\subsection{Experimental Settings}
\label{subsec:exp_setting}

\paragraph{Tasks and Training Details}
To evaluate our Wasserstein policy regularization, we conduct open-ended text generation experiments on two datasets: the TL;DR dataset~\citep{volske2017tl} for text summarization and the Anthropic Helpful and Harmless (HH-RLHF) dataset~\citep{bai2022training} for dialogue generation. We follow the experimental setup of \citet{chai2025marlhf}\footnote{\url{https://github.com/ernie-research/MA-RLHF}}, which provides open-source implementations for RLHF research. Our base model is the pre-trained Gemma-2B~\citep{team2024gemma}, and we use identical training configurations across all baselines and our method, varying only the regularization hyperparameters.
For each method, the policy regularization hyperparameter $\beta$ is selected via grid search to identify the value at which training remained stable, and the best-performing model is reported. For Wasserstein policy regularization, we define the cost function as the Euclidean distance in the fixed token embedding space from the reference policy, set $\lambda=100$, and apply truncation hyperparameters $k_1=512$ and $k_2=128$. Further experimental details are provided in \cref{app_sec:add_exp_setting}.

\paragraph{Baselines}
We compare regularization based on various divergences with the proposed entropic Wasserstein-based regularization. Specifically, in addition to our approach using the entropic Wasserstein distance in \cref{eq:rlhf_entropy_w}, we evaluate reverse KL (RKL) divergence in \cref{eq:rlhf_kl3}, as well as token-level divergence in \cref{eq:rlhf_kl2} instantiated with alternative $f$-divergences, including forward KL (FKL), JS, $\alpha$-divergence with $\alpha=0.5$, total variation (TV), and $\chi^2$ divergence. Each $f$-divergence can be expressed in the form of a penalty on the reward through its defining function $f$, and the corresponding functions for each divergence are summarized in \cref{tab:f_div} of \cref{app_subsec:training_details}.
\begin{align}
    & \max_{\pi_{\bm{\theta}}} \mathcal{J}_{\text{$f$}} (\pi_{\bm{\theta}} ; \pi_{\text{ref}} )  \label{eq:rlhf_f} \\
    & = \mathbb{E}_{\rvx \sim \mathcal{D}} \left [ \sum_{n=1}^{N} \mathbb{E}_{\rvy_n \sim \pi_{\bm{\theta}} (\rvy_n | \rvx, \rvy_{1:n-1})} \left [ R(\rvx,\rvy_{1:n}) - \beta   \frac{\pi_{\text{ref}}(y_n | \rvx, \rvy_{1:n-1})}{\pi_{\bm{\theta}} (y_n | \rvx, \rvy_{1:n-1})} f \left ( \frac{\pi_{\bm{\theta}} (y_n | \rvx, \rvy_{1:n-1})}{\pi_{\text{ref}} (y_n | \rvx, \rvy_{1:n-1})} \right ) \right ] \right ]. \nonumber
\end{align}

\paragraph{Evaluation}
We adopt GPT-4 win rate, a widely used evaluation metric in recent LLM studies~\citep{zheng2023judging,chai2025marlhf}, as our primary metric. For evaluation, we randomly sample 50 validation instances and generate model responses, repeating this procedure five times. Then, GPT-4 is asked to perform pairwise comparisons between model outputs and report a win rate. We use the GPT-4 evaluation prompts from \citet{chai2025marlhf}, with the full prompt included in \cref{app_subsec:evaluation}. For TL;DR, we assess relevance, coherence, consistency, and fluency; while for HH-RLHF we focus on helpfulness. To reduce evaluation bias, we randomize the order of the responses.

\begin{table}[tp]
    \centering
    \caption{Comparison of win rates for policy regularization with various divergences, compared to SFT and RKL-regularized PPO on the TL;DR and the HH-RLHF datasets with the Gemma-2B model.}
    \setlength{\tabcolsep}{4pt}
    \adjustbox{max width=0.75\linewidth}{%
        \begin{tabular}{l cc p{0.5mm} cc}
            \toprule
            \multirow{2}{*}{Divergence}     & \multicolumn{2}{c}{TL;DR} && \multicolumn{2}{c}{HH-RLHF}  \\
            \cmidrule{2-3} \cmidrule{5-6}
                                        & vs. SFT & vs. RKL && vs. SFT & vs. RKL \\
            \midrule
            RKL & \rateinline{0.848}{0.021} & - && \rateinline{0.828}{0.010} & -  \\
            FKL & \rateinline{0.316}{0.026} & \rateinline{0.040}{0.011} && \rateinline{0.808}{0.048} & \rateinline{0.564}{0.019} \\
            JS & \rateinline{0.540}{0.024} & \rateinline{0.204}{0.029} && \rateinline{0.744}{0.031} & \rateinline{0.424}{0.024}  \\
            $\alpha$ ($\alpha=0.5$) & \rateinline{0.724}{0.031} & \rateinline{0.304}{0.016} &&  \rateinline{0.792}{0.047} & \rateinline{0.524}{0.046}  \\
            TV & \rateinline{0.364}{0.039} & \rateinline{0.052}{0.021} &&  \rateinline{0.748}{0.038} & \rateinline{0.376}{0.010}  \\
            $\chi^2$ & \rateinline{0.904}{0.026} & \rateinline{0.540}{0.030} &&  \rateinline{0.796}{0.039} & \rateinline{0.344}{0.017}  \\
            \midrule
            \bf{Wasserstein (ours)} & \rateinline{\textbf{0.924}}{0.019} & \rateinline{\textbf{0.608}}{0.029} && \rateinline{\textbf{0.852}}{0.027} & \rateinline{\textbf{0.616}}{0.039}  \\           
            \bottomrule
        \end{tabular}
    }
    \label{tab:main}
\end{table}

\subsection{Quantitative Results}
\label{subsec:quant}

\paragraph{Comparison to Other Divergences}
\cref{tab:main} reports the performance of summarization and dialogue generation tasks under different policy regularization methods. Additionally, \cref{fig:temp} presents win-rate comparisons against RKL across varying sampling temperatures on the HH-RLHF dataset. As the results demonstrate, our proposed Wasserstein policy regularization method achieves the best results on both datasets. In contrast, $f$-divergence-based methods rely on probability ratios between policies, which can produce exploding values and unstable training. This issue is evident in the particularly poor TL;DR results of FKL and TV. By comparison, our method remains well-defined even under support mismatch, enabling stable training and delivering superior performance consistently.

\begin{figure}[t]
    \centering
    \begin{minipage}[t]{0.65\textwidth}
        \centering
        \vspace*{0pt}
        \captionof{table}{Win rates on TL;DR with Gemma-7B. `-2B' compares to the 2B models in \cref{tab:main}, and `-7B' to the 7B baselines.}
        \setlength{\tabcolsep}{5pt}
        \adjustbox{max width=\linewidth}{%
        \begin{tabular}{l cc cc}
            \toprule
                 &  vs. SFT-2B & vs. RKL-2B & vs. SFT-7B & vs. RKL-7B  \\
            \midrule
            RKL & \textbf{0.948} & 0.668 & 0.912 & - \\
            \textbf{Wasserstein} & \textbf{0.948} & \textbf{0.712} & \textbf{0.924} & \textbf{0.532} \\
            \bottomrule
        \end{tabular}
        \label{tab:7b}
        }
    \end{minipage}
    \hfill
    \begin{minipage}[t]{0.33\textwidth}
        \vspace*{0pt}
        \centering
        \captionof{table}{Win rates on HH-RLHF with Qwen1.5-1.8B-Chat.}
        \setlength{\tabcolsep}{5pt}
        \adjustbox{max width=\linewidth}{%
        \begin{tabular}{l cc}
            \toprule
                 &  vs. SFT & vs. RKL \\
            \midrule
            RKL & 0.716 & - \\
            \textbf{Wasserstein} & \textbf{0.752} & \textbf{0.560} \\
            \bottomrule
        \end{tabular}
        \label{tab:qwen}
        }
    \end{minipage}
\end{figure}

\begin{wraptable}[14]{r}{0.3\textwidth}
    \vspace{-1.2em}
    \centering
    \caption{MT-Bench score comparison on Gemma-2B trained on HH-RLHF, evaluated with GPT-4 single-answer grading.}
    \setlength{\tabcolsep}{4pt}
    \adjustbox{max width=0.75\linewidth}{%
        \begin{tabular}{l c}
            \toprule
            Divergence     & Score  \\
            \midrule
            RKL & 4.000 \\
            FKL & 4.247 \\
            JS & 4.197 \\
            $\alpha$ ($\alpha=0.5$) & 4.256 \\
            TV & 4.072 \\
            $\chi^2$ & 4.144  \\
            \midrule
            \bf{Wasserstein} & \bf{4.272} \\           
            \bottomrule
        \end{tabular}
    }
    \label{tab:mt_bench}
\end{wraptable}

\paragraph{MT-Bench Results}

To further empirically validate our approach, we evaluate model using MT-Bench, a GPT-4-based benchmark shown to strongly correlate with human preference judgments~\citep{zheng2023judging}. We follow the official implementation\footnote{\url{https://github.com/lm-sys/FastChat/tree/main/fastchat/llm_judge}} and apply single-answer grading to models fine-tuned on HH-RLHF. As shown in \cref{tab:mt_bench}, our method achieves the highest performance among all baselines, indicating that semantic-aware regularization improves broader conversational and instruction following abilities.

\begin{table}[tp]
    \centering
    \caption{Performance comparison on APPS with the CodeGemma-7B model.}
    \setlength{\tabcolsep}{4pt}
    \adjustbox{max width=0.95\linewidth}{%
        \begin{tabular}{l cc p{0.5mm} cc p{0.5mm} cc p{0.5mm} cc}
            \toprule
            & \multicolumn{2}{c}{Introductory} && \multicolumn{2}{c}{Interview} && \multicolumn{2}{c}{Competition}  && \multicolumn{2}{c}{All}  \\
            \cmidrule{2-3} \cmidrule{5-6} \cmidrule{8-9} \cmidrule{11-12}
             & Reward & pass@1 && Reward & pass@1 && Reward & pass@1 && Reward & pass@1 \\
            \midrule
            SFT         & 0.1024 & 23.12 && -0.1720 & 4.86 && -0.3239 & 1.48 && -0.1475 & 7.84 \\
            RKL         & 0.1387 & 24.00 && -0.1316 & 5.28 && -0.2910 & 1.76 && -0.1093 & 8.32 \\
            \textbf{Wasserstein} & \textbf{0.1606} & \textbf{24.78} && \textbf{-0.1062} & \textbf{5.75} && \textbf{-0.2638} & \textbf{1.92} && \textbf{-0.0843} & \textbf{8.79} \\
            \bottomrule
        \end{tabular}
    }
    \label{tab:apps}
\end{table}

\paragraph{Other LLM backbones}
To assess the scalability and generalization of WPR, we evaluate the method on larger and architecturally distinct LLM backbones. Using Gemma-7B on the TL;DR summarization task, \cref{tab:7b} shows that WPR continues to outperform the RKL-regularized baseline. We further demonstrate generalization by training Qwen-1.5-1.8B-Chat~\citep{qwen} on HH-RLHF. As shown in \cref{tab:qwen}, WPR improves performance over the RKL-regularized model. These results indicate that WPR remains consistently effective across different architectures and model scales.

\paragraph{Code Generation}
We also examine WPR in a different application domain, code generation. Following the experimental settings used in \citet{chai2025marlhf}, we assess CodeGemma-7B~\citep{team2024codegemma} on the APPS dataset~\citep{hendrycks2021measuring}. We compute a reward using the compiler-execution signal employed in prior work~\citep{chai2025marlhf}, and we report both the reward and the pass@1 metric over the full 5k test set. As shown in \cref{tab:apps}, WPR achieves consistent improvements across the Introductory, Interview, and Competition levels, as well as in the overall performance.

\begin{figure}[t]
    \centering
    \begin{minipage}[t]{0.72\textwidth}
        \vspace*{0pt}
        \captionof{table}{Ablation study of WPR on TL;DR.}
        \setlength{\tabcolsep}{4pt}
        \adjustbox{max width=0.95\linewidth}{%
            \begin{tabular}{ll cc}
                \toprule
                \multirow{2}{*}{Method}     && \multicolumn{2}{c}{Win rate}\\
                \cmidrule{3-4}
                                           & & vs. SFT & vs. RKL \\
                \midrule
                Our default settings &&  \rateinline{{0.924}}{0.019} & \rateinline{{0.608}}{0.029} \\
                \midrule
                Cost change& (L2 $\rightarrow$ cosine) &  \rateinline{{0.932}}{0.014} & \rateinline{{0.644}}{0.047}  \\
                \cmidrule{1-2}
                Decreased $k_1$ &($512 \rightarrow 256$) & \rateinline{{0.920}}{0.006} & \rateinline{{0.572}}{0.025} \\
                Decreased $k_2$ &($128 \rightarrow 64$) & \rateinline{{0.864}}{0.015} & \rateinline{{0.528}}{0.032} \\
                Decreased $\lambda$ &($100 \rightarrow 10$) & \rateinline{{0.868}}{0.024} & \rateinline{{0.552}}{0.010} \\
                \cmidrule{1-2}
                Decreased Sinkhorn iterations &($10 \rightarrow 5$) & \rateinline{{0.708}}{0.027} & \rateinline{{0.328}}{0.026} \\
                Increased Sinkhorn iterations &($10 \rightarrow 30$) & \rateinline{{0.880}}{0.021} & \rateinline{{0.536}}{0.029} \\
                \bottomrule
            \end{tabular}
        }
        \label{tab:abl}
    \end{minipage}
    \begin{minipage}[t]{0.26\textwidth}
        \vspace*{0pt}
        \centering
        \includegraphics[width=\linewidth]{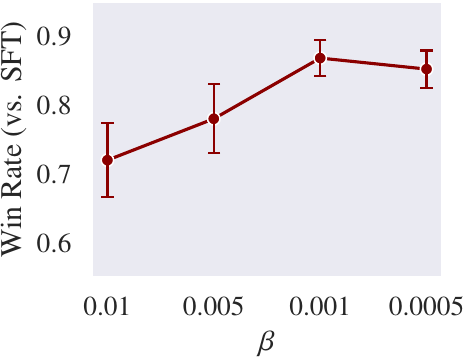}
        \caption{Sensitivity analysis of the policy regularization hyperparameter $\beta$ on HH-RLHF.}
        \label{fig:sens_beta}
    \end{minipage}
\end{figure}

\subsection{Analysis of Wasserstein Policy Regularization}
\label{subsec:analysis}

\paragraph{Ablation Study}

We conduct an ablation study to better understand the effect of components in our Wasserstein policy regularization framework, with results summarized in \cref{tab:abl}. Changing the cost function from the Euclidean to cosine distance yields slightly improved results, suggesting that the framework is robust to the choice of token-level cost metric.
Decreasing the truncation parameters $k_1$ and $k_2$ or the entropy regularization strength $\lambda$ leads to a slight drop in performance, though our method still consistently outperforms RKL. Smaller $k_1$ and $k_2$ introduce approximation errors in the distance computation, and a smaller $\lambda$ produces overly sharp couplings that reduce stability. In practice, we use the default settings, which provide consistently robust performance across datasets and configurations.
The number of Sinkhorn iterations also affects the distance computation. Reducing iterations from 10 to 5 leads to a sharp drop in performance due to insufficient convergence, while increasing iterations to 30 provides no additional benefit over the default setting. These results suggest that a moderate number of iterations is sufficient for achieving a balance between accuracy and computational efficiency.

Additionally, we analyze the sensitivity analysis of the policy regularization coefficient $\beta$ in \cref{fig:sens_beta}. Our Wasserstein-regularized approach achieves stable performance across a broad range of $\beta$ values, consistently outperforming the SFT baseline. In contrast, we observe that the $f$-divergence regularized RLHF yields stable training only within narrow ranges of $\beta$, as also reported in the previous work~\citep{wang2023beyond}. While our method demonstrates robustness over a wider range of $\beta$, it still requires selecting an appropriate $\beta$, highlighting a fundamental limitation of RLHF. Developing approaches that reduce or remove this dependence is an important direction for future work.


\begin{wrapfigure}[11]{R}{0.2\textwidth}
    \vspace{-1.2em}
    \centering
    \includegraphics[width=\linewidth]{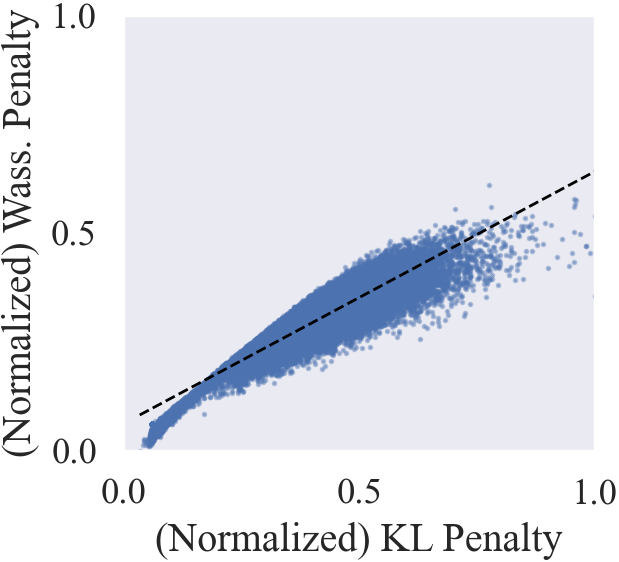}
    \caption{Normalized KL vs. Wasserstein penalty.}
    \label{fig:penalty}
\end{wrapfigure}

\paragraph{Wasserstein Penalty}

\cref{fig:penalty} compares the KL and Wasserstein penalties computed during training on TL;DR, where both are scaled by the optimal regularization coefficient $\beta$ and jointly normalized to the range $[0,1]$ using a shared min-max range. Note that larger penalties correspond to greater deviation from the reference policy. As shown in the figure, the two penalties exhibit a strong positive correlation, with a Pearson correlation coefficient of 0.917. This result demonstrates that our Wasserstein penalty, similar to the KL penalty, increases as the learned policy differs from the reference policy. Moreover, the fitted trend line has a slope of 0.579, which is less than 1, with a substantial fraction of points lying below the line, indicating that the Wasserstein penalty tends to be more lenient than KL.

\begin{figure}[t]
    \centering
    \begin{minipage}[t]{0.51\textwidth}
        \vspace*{0pt}
        \centering
        \vspace{-.5em}
        \captionof{table}{Pearson correlation between each negative penalty and BERTScore~\citep{zhang2020BERTScore}.} 
        \setlength{\tabcolsep}{5pt}
        \adjustbox{max width=0.9\linewidth}{%
        \begin{tabular}{l cc}
            \toprule
               &  TL;DR & HH-RLHF  \\
            \midrule 
            KL penalty & 0.1734  & 0.0172 \\
            \textbf{Wasserstein penalty} & \textbf{0.2160} & \textbf{0.1749} \\
            \bottomrule
        \end{tabular}
        \label{tab:corr_bert}
        }
    \end{minipage}
    \hfill
    \begin{minipage}[t]{0.47\textwidth}
        \centering
        \vspace*{0pt}
        \vspace{-.5em}
        \captionof{table}{Semantic coherence of top-10 token candidates on each dataset.} 
        \setlength{\tabcolsep}{5pt}
        \adjustbox{max width=0.9\linewidth}{%
        \begin{tabular}{l cc}
            \toprule
               &  TL;DR & HH-RLHF \\
            \midrule 
            RKL & \rateinline{3.781}{0.005} & \rateinline{3.690}{0.004} \\
            \textbf{Wasserstein} & \rateinline{\textbf{3.593}}{0.003} & \rateinline{\textbf{3.584}}{0.004} \\
            \bottomrule
        \end{tabular}
        \label{tab:semantic}
        }
    \end{minipage}
\end{figure}

\begin{figure}[t]   
    \centering
    \begin{subfigure}[b]{\linewidth}
        \centering
        \includegraphics[width=\linewidth]{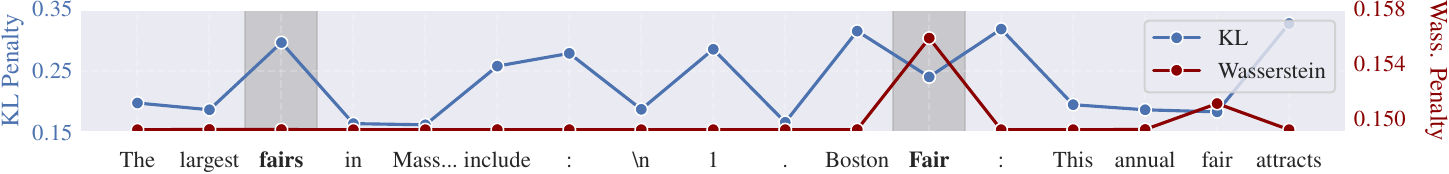}
        \caption{Normalized token-wise KL vs. Wasserstein penalties}
        \label{subfig:casestudy_penalty}
    \end{subfigure}
    \hfill 
    \begin{subfigure}[b]{0.49\textwidth}
    \centering
        \begin{subfigure}[b]{0.58\textwidth}
            \centering
            \includegraphics[width=\linewidth]{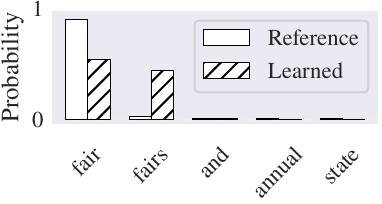}
        \end{subfigure}
        \begin{subfigure}[b]{0.4\textwidth}
            \centering
            \includegraphics[width=\linewidth]{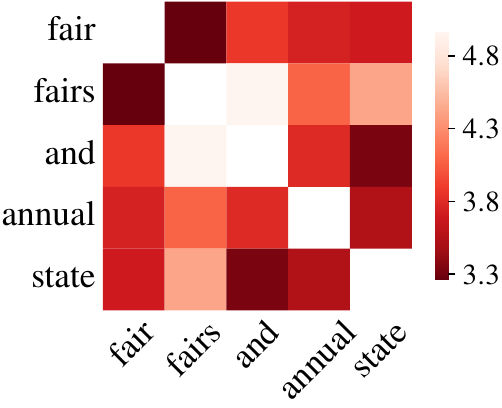}
        \end{subfigure}
        \caption{2nd token: \texttt{fairs}}
        \label{subfig:casestudy_first}
    \end{subfigure}
    \hfill
    \begin{subfigure}[b]{0.49\textwidth}
    \centering
        \begin{subfigure}[b]{0.58\textwidth}
            \centering
            \includegraphics[width=\linewidth]{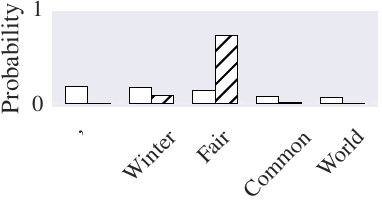}
        \end{subfigure}
        \begin{subfigure}[b]{0.4\textwidth}
            \centering
            \includegraphics[width=\linewidth]{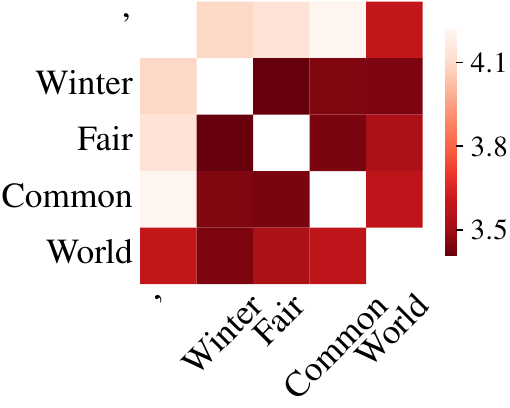}
        \end{subfigure}
        \caption{11th token: \texttt{Fair}}
        \label{subfig:casestudy_second}
    \end{subfigure}
    \caption{Case study of penalties on Gemma-2B. The prompt is ``\textit{What fair is the largest fair in Massachusetts?}'', and the generated response is ``\textit{The largest fairs in Massachusetts include: 1. Boston Fair: ...}''. (a) Normalized penalties for each generated token. (b-c) Next-token distribution from each policy, along with the relevant cost matrix entries, for the 2nd and 11th tokens highlighted in (a).}
    \vspace{-1em}
    \label{fig:casestudy}
\end{figure}


To understand how the Wasserstein penalty captures semantic relationships and influences model behavior, we conduct the following analyses.
We first evaluate whether the penalty aligns with semantic similarity using BERTScore~\citep{zhang2020BERTScore}. For responses generated by the reference and learned policies, we compute BERTScore with averaged KL and Wasserstein penalties. Because higher BERTScore indicates greater semantic similarity, we correlate it with the negative value of each penalty. As shown in \cref{tab:corr_bert}, the Wasserstein penalty shows a stronger positive correlation with BERTScore, providing quantitative evidence that WPR better reflects semantic similarity.

We further analyze how the penalty behaves in actual LLM distributions. For the example in \cref{subfig:casestudy_penalty}, KL fluctuates widely, whereas WPR often assigns minimal penalty. To further investigate, we directly compare the next-token distributions at specific tokens. For semantically similar substitutions in \cref{subfig:casestudy_first}, KL assigns a large penalty due to an exact index mismatch, while WPR gives a small penalty by recognizing semantic proximity. Conversely, when probability mass shifts toward unrelated tokens in \cref{subfig:casestudy_second}, WPR assigns a large penalty, correctly signaling semantic drift.

We also measure semantic coherence of the learned LLMs. For every generated token, we extract the top-10 next-token candidates and compute their mean pairwise embedding distance; smaller distances indicate greater semantic coherence. As shown in \cref{tab:semantic}, WPR consistently produces more semantically coherent candidate sets than KL, with statistically significant margins.

Together, these results show that WPR penalizes semantic drift and promotes coherent semantic structure. We conjecture that this semantic awareness contributes to the improved alignment performance.


\section{Conclusion}
\label{sec:conc}           

In this work, we propose a semantic-aware policy regularization framework for RLHF based on the entropy-regularized Wasserstein distance, which captures semantic similarity between tokens beyond the limits of KL and other $f$-divergences. By formulating the regularization in the dual space, our method yields tractable penalties compatible with standard RL algorithms, while remaining computationally efficient via the Sinkhorn-Knopp algorithm. Experiments on summarization and dialogue generation tasks demonstrate consistent improvements over KL- and $f$-divergence-based baselines, with higher win rates and MT-Bench score. These results highlight the effectiveness of semantic-aware policy distances for stable and robust alignment of large language models.


\subsubsection*{Acknowledgments}

This work was supported by the InnoCORE program of the Ministry of Science and ICT (N10260008) (50\%). This work was supported by the IITP (Institute of Information \& Communications Technology Planning \& Evaluation)-ITRC (Information Technology Research Center) grant funded by the Korea government (Ministry of Science and ICT) (IITP-2026-RS-2024-00437268) (50\%).

\bibliography{Wasserstein}
\bibliographystyle{iclr2026_conference}

\newpage
\appendix


\section{Proofs and Derivations}
\label{app_sec:proof}

\subsection{\texorpdfstring{Derivation of \cref{eq:eot_dual}}{Derivation of Eq. (12)}}
\label{app_subsec:derivation}

We derive the dual problem in \cref{eq:eot_dual} from the Lagrangian $\mathcal{L}$ in \cref{eq:lagrangian} constructed for the entropic primal transportation problem in \cref{eq:sinkhorn}. First, we rewrite the Lagrangian as follows:
\begin{align}
     &\mathcal{L}(\mP^{(n)}, \bm{\phi}, \bm{\psi}) \\
      := & \sum_{i=1}^{d}\sum_{j=1}^{d} \left ( P_{ij}^{(n)} C_{ij} + \frac{1}{\lambda} P_{ij}^{(n)} (\log P_{ij}^{(n)} - 1)\right ) \nonumber \\
     & + \sum_{i=1}^{d} \phi_i ( [\pi_{\bm{\theta}}]_i - \sum_{k=1}^d P_{ik}^{(n)}) + \sum_{j=1}^{d} \psi_j ( [\pi_{\text{ref}}]_j - \sum_{k=1}^d P_{kj}^{(n)}) \label{eq:app_1} \\
     = &\sum_{i=1}^{d}\sum_{j=1}^{d} P_{ij}^{(n)} \left ( C_{ij} - \phi_i - \psi_j + \frac{1}{\lambda} ( \log P_{ij}^{(n)} - 1 ) \right ) + \sum_{i=1}^{d} \phi_i ( [\pi_{\bm{\theta}}]_i) + \sum_{j=1}^{d} \psi_j ( [\pi_{\text{ref}}]_j ),  \label{eq:app_2}
\end{align}
where $\{\phi_i\}_{i=1}^d$ and $\{\psi_j\}_{j=1}^d$ are the Lagrange multipliers.
Based on this Lagrangian, the primal and dual problem can be written as follows:
\begin{align}
     \text{(Primal)} & \quad \min_{\mP^{(n)}} \max_{\bm{\phi},\bm{\psi}} \mathcal{L}(\mP^{(n)}, \bm{\phi}, \bm{\psi}), \\
     \text{(Dual)} & \quad \max_{\bm{\phi},\bm{\psi}} \min_{\mP^{(n)}} \mathcal{L}(\mP^{(n)}, \bm{\phi}, \bm{\psi}). \label{eq:app_dual1}
\end{align}
By differentiating the Lagrangian with respect to $P_{ij}^{(n)}$, we derive the condition that the optimal ${P_{ij}^{(n)}}^*$ satisfy as follows:
\begin{align}
     C_{ij} - \phi_i - \psi_j + \frac{1}{\lambda} \log {P_{ij}^{(n)}}^*  = 0 \quad \Leftrightarrow \quad {P_{ij}^{(n)}}^* = \exp \left ( \lambda ( \phi_i + \psi_j - C_{ij} ) \right ). \label{eq:primal_diff}
\end{align}
Therefore, by substituting the optimal ${P_{ij}^{(n)}}^*$ in \cref{eq:primal_diff} into \cref{eq:app_2}, we can express the dual problem of \cref{eq:app_dual1} as
\begin{align}
    \max_{\phi,\psi} \sum_{i=1}^{d} \phi_i [\pi_{\bm{\theta}}]_i + \sum_{j=1}^{d} \psi_j [\pi_{\text{ref}}]_j - \sum_{i=1}^{d} \sum_{j=1}^{d} \frac{1}{\lambda} \exp (\lambda (\phi_i + \psi_j - C_{ij} )). \label{eq:app_dual}
\end{align}

\subsection{\texorpdfstring{Proof of \cref{thm:solution}}{Proof of Proposition 1}}
\label{app_subsec:thm_solution}

\thma*
\begin{proof}

From \cref{eq:primal_diff} in the derivation of \cref{app_subsec:derivation}, the optimal ${P_{ij}^{(n)}}^*$ can be written as
\begin{align}
    {P_{ij}^{(n)}}^* = \exp \left ( \lambda ( \phi_i + \psi_j - C_{ij} ) \right ) = \exp(\lambda\phi_i)\,\exp(-\lambda C_{ij})\,\exp(\lambda\psi_j).
\end{align}

Defining the positive kernel $\mK:=\exp(-\lambda \mC)$, where the exponential is applied element-wise, and the scaling vectors $\mathbf{u}:=\exp(\lambda \bm{\phi})$ and $\mathbf{v}:=\exp(\lambda \bm{\psi})$, the optimal coupling admits the compact representation
\begin{align}
    \mP^{(n)} =  \mathrm{diag}(\mathbf{u})\, \mK\, \mathrm{diag}(\mathbf{v}).
\end{align}
Since $\mK$ is strictly positive, the Sinkhorn-Knopp theorem \citep{sinkhorn1967concerning} guarantees the existence and uniqueness (up to an additive constant) of strictly positive scaling vectors $\mathbf{u},\mathbf{v} \in \mathbb{R}^d_{+}$ such that $\mP^{(n)} \in U_n( \pi_{\bm{\theta}}, \pi_{\text{ref}} )$. Hence, the primal optimum $\mP^{(n)}$ is unique and necessarily of the form $\mathrm{diag}(\mathbf{u})\,\mK\,\mathrm{diag}(\mathbf{v})$.

Finally, by the definitions of $\mathbf{u}$ and $\mathbf{v}$, the corresponding optimal dual variables are given by
\begin{align}
    \bm{\phi}^* = -\frac{1}{\lambda}\log \mathbf{u},
    \qquad
    \bm{\psi}^* = -\frac{1}{\lambda}\log \mathbf{v},
\end{align}
which yields the stated representation
\begin{align}
    {\mP^{(n)}}^* = \mathrm{diag}(\mathbf{u})\, \exp(-\lambda \mC)\, \mathrm{diag}(\mathbf{v}), 
    \qquad
    \bm{\phi}^* = -\frac{1}{\lambda}\log \mathbf{u}, 
    \qquad
    \bm{\psi}^* = -\frac{1}{\lambda}\log \mathbf{v}.
\end{align}

\end{proof}

\subsection{\texorpdfstring{Proof of \cref{thm:objective}}{Proof of Theorem 2}}
\label{app_subsec:thm_objective}

\thmb*
\begin{proof}

First, the objective of Wasserstein-regularized RLHF can be written as
\begin{align}
    & \mathcal{J}_{\tilde{\text{W}}} (\pi_{\bm{\theta}} ; \pi_{\text{ref}} ) :=  \label{eq:app_rlhf_entropy_w}  \\
    & \mathbb{E}_{\rvx \sim \mathcal{D}} \left [\sum_{n=1}^{N} \mathbb{E}_{y_n \sim \pi_{\bm{\theta}} (y_n | \rvx, \rvy_{1:n-1})} \left [ R(\rvx,\rvy_{1:n}) \right ]   - \beta  \sum_{n=1}^N D_{\tilde{\text{W}}}^{\lambda} \left ( \pi_{\bm{\theta}} (y_{n} | \rvx, \rvy_{1:n-1}) || \pi_{\text{ref}} (y_{n} | \rvx, \rvy_{1:n-1}) \right ) \right ]. \nonumber
\end{align}
By strong duality, the entopic Wasserstein distance $D_{\tilde{\text{W}}}^{\lambda}$ is equal to the optimal objective value of the dual problem in \cref{eq:eot_dual}. Substituting the optimal solutions from \cref{thm:solution}, we obtain
\begin{align}
    & D_{\tilde{\text{W}}}^{\lambda} \left ( \pi_{\bm{\theta}} (y_{n} | \rvx, \rvy_{1:n-1}) || \pi_{\text{ref}} (y_{n} | \rvx, \rvy_{1:n-1}) \right ) \\
     =&   \sum_{i=1}^{d} \phi_i^*(\rvx, \rvy_{1:n-1}) \pi_{\bm{\theta}} (y_n=i | \rvx, \rvy_{1:n-1}) + \sum_{j=1}^{d} \psi_j^*(\rvx, \rvy_{1:n-1}) \pi_{\text{ref}}(y_n=j | \rvx, \rvy_{1:n-1}) \nonumber \\
    & - \sum_{i=1}^{d} \sum_{j=1}^{d} \frac{1}{\lambda} \exp (\lambda (\phi_i^*(\rvx, \rvy_{1:n-1}) + \psi_j^*(\rvx, \rvy_{1:n-1}) - C_{ij} )) \\
     =&   \sum_{i=1}^{d} \phi_i^*(\rvx, \rvy_{1:n-1}) \pi_{\bm{\theta}} (y_n=i | \rvx, \rvy_{1:n-1}) + \mathcal{C} \\
     =&   \mathbb{E}_{y_n \sim \pi_{\bm{\theta}} (y_n | \rvx, \rvy_{1:n-1})} \left [\phi_{y_n}^*(\rvx, \rvy_{1:n-1}) \right ]  + \mathcal{C}, \label{eq:derive_penalty}
\end{align}
where $\mathcal{C}$ denotes a constant with respect to $\pi_{\bm{\theta}}$.

Substituting \cref{eq:derive_penalty} into \cref{eq:app_rlhf_entropy_w}, the objective reduces to a reward maximization problem with an additional penalty induced by the dual variables $\bm{\phi}^*$:
\begin{align}
    & \mathcal{J}_{\tilde{\text{W}}} (\pi_{\bm{\theta}} ; \pi_{\text{ref}} )   \nonumber \\
    &  = \mathbb{E}_{\rvx \sim \mathcal{D}} \left [\sum_{n=1}^{N} \mathbb{E}_{y_n \sim \pi_{\bm{\theta}} (y_n | \rvx, \rvy_{1:n-1})} \left [ R(\rvx,\rvy_{1:n}) \right ]   - \beta  \sum_{n=1}^N  \mathbb{E}_{y_n \sim \pi_{\bm{\theta}} (y_n | \rvx, \rvy_{1:n-1})} \left [\phi_{y_n}^*(\rvx, \rvy_{1:n-1}) \right ]  + \mathcal{C} \right ]  \\
    & = \mathbb{E}_{\rvx \sim \mathcal{D}} \left [\sum_{n=1}^{N} \mathbb{E}_{y_n \sim \pi_{\bm{\theta}} (y_n | \rvx, \rvy_{1:n-1})} \left [ R(\rvx,\rvy_{1:n}) - \beta \phi_{y_n}^*(\rvx, \rvy_{1:n-1}) \right ]  \right ] + \mathcal{C}
\end{align}

\end{proof}

\section{Training Algorithm of RLHF with Wasserstein Policy Regularization}
\label{app_sec:RLHF_algorithm}

\begin{algorithm}[H]
     \caption{RLHF with Wasserstein Policy Regularization}
     \label{alg:full}
     \begin{algorithmic}[1]
         \REQUIRE Current policy $\pi_{\bm{\theta}}$, Old policy $\pi_{\bm{\theta^-}}$, Reference policy $\pi_{\text{ref}}$, Reward model $r(\rvx, \rvy)$, Cost matrix $\mC$, Dataset $\mathcal{D}$         
         \FOR{$t_{\text{train}} = 1$ {\bf to} $T_{\text{train}}$}
         \STATE Sample $\rvy_{1:d} \sim \pi_{\bm{\theta}}(\cdot|\rvx)$ for $\rvx \sim \mathcal{D}$ \textcolor{gray}{(Computation with Batch Samples)}
         \STATE Get $R(\rvx,\rvy_{1:n})$ with reward model $r$ for $n=\{1,2,\cdots,d\}$
         \STATE Compute $\phi_{y_n}^*(\rvx,\rvy_{1:n-1})$ via \cref{alg:penalty} using $\mC$ for $n=\{1,2,\cdots,d\}$
         \STATE Obtain $\hat A_n$ for $n=\{1,2,\cdots,d\}$ via \cref{eq:advantage}
         \STATE Compute $\nabla_{\bm{\theta}} \mathcal{J}_{\tilde{\text{W}}}(\bm{\theta})$ via \cref{eq:grad_policy}
         \STATE Compute $\nabla_{\bm{\psi}} \mathcal{L}_V(\bm{\psi})$ via \cref{eq:grad_value}         
         \STATE Update $\bm{\theta} \leftarrow \bm{\theta} + \eta_\pi \nabla_{\bm{\theta}} \mathcal{J}_{\tilde{\text{W}}}(\bm{\theta})$, $\bm{\psi} \leftarrow \bm{\psi} - \eta_V \nabla_{\bm{\psi}} \mathcal{L}_V(\bm{\psi})$ and $\bm{\theta}^- \leftarrow \bm{\theta}$         
         \ENDFOR
         \ENSURE Learned policy $\pi_{\bm{\theta}}$
     \end{algorithmic}
\end{algorithm}

In this section, we present the detailed training algorithm for RLHF with Wasserstein Policy Regularization (WPR). As in conventional RLHF \citep{ouyang2022training}, we iteratively sample response as $y_n\sim\pi_{\bm{\theta}^-}(\rvx,\rvy_{1:n-1})$ to get $\rvy_{1:d}$. Here, $\pi_{\bm{\theta}^-}$ is old policy whose parameters $\bm{\theta}^-$ are periodically updated by that of the current policy, $\pi_{\bm{\theta}}$. At each token generation step $n$, we adopt Generalized Advantage Estimation (GAE) \citep{schulman2015high} for penalized reward $R'(\rvx,\rvy_{1:n})=R(\rvx,\rvy_{1:n}) - \beta \phi_{y_n}^*(\rvx,\rvy_{1:n-1})$ in \cref{eq:rlhf_w_penalty}.

Then, the advantage with GAE denoted as $\hat A_n$ at each step $n$ can be expressed as
\begin{align}\label{eq:advantage}
\hat A_n=\sum_{l\ge0}(\gamma\lambda)^l\,\delta_{n+l},
\end{align}
where
\begin{align}
\delta_n= R'(\rvx,\rvy_{1:n})+\gamma V_{\bm{\psi}}(\rvx,\rvy_{1:n})-V_{\bm{\psi}}(\rvx,\rvy_{1:n-1}).
\end{align}

Here, $\gamma$ is a discount factor; $\lambda$ is a hyperparameter for GAE; and $V_{\bm{\psi}}$ is a value network, which estimates the discounted cumulative reward or return of given state $(\rvx,\rvy_{1:n})$, denoted as $\hat G_n$. Thus, the learning loss for $V_{\bm{\psi}}$ is defined as
\begin{align}
 \mathcal{L}_V(\bm{\psi})=\ \mathbb{E}_{\rvx \sim \mathcal{D}} \left [ \sum_{n=1}^{N} \mathbb{E}_{y_n \sim \pi_{\bm{\theta}^-} (y_n | \rvx, \rvy_{1:n-1})} \left [ (V_{\bm{\psi}}(\rvx,\rvy_{1:n})-\hat G_n)^2 \right ]  \right ].
\end{align}
Then, its gradient is expressed as follows.
\begin{align}\label{eq:grad_value}
\nabla_{\bm{\psi}} \mathcal{L}_V(\bm{\psi})
= \mathbb{E}_{\rvx \sim \mathcal{D}}
\left[
\sum_{n=1}^{N}
\mathbb{E}_{\,y_n \sim \pi_{\bm{\theta}^-}(y_n \mid \rvx,\rvy_{1:n-1})}
\left[
2\big(V_{\bm{\psi}}(\rvx,\rvy_{1:n})-\hat G_n\big)\;
\nabla_{\bm{\psi}} V_{\bm{\psi}}(\rvx,\rvy_{1:n})
\right]
\right]
\end{align}

In RLHF, $V_{\bm{\psi}}$ and $\pi_{\bm{\theta}}$ are updated together. By substituting a step-wise penalized reward $R'(\rvx,\rvy_{1:n})$ with $\hat A_n$, \cref{eq:rlhf_w_penalty} is expressed as
\begin{align}\label{eq:policy_obj}
    \mathcal{J}_{\tilde{\text{W}}}(\bm{\theta}) = \mathbb{E}_{\rvx \sim \mathcal{D}} \left [ \sum_{n=1}^{N} \mathbb{E}_{y_n \sim \pi_{\bm{\theta}^-} (y_n | \rvx, \rvy_{1:n-1})} \left [ \frac{\pi_{\bm{\theta}}}{\pi_{\bm{\theta}^-}} \hat A_n \right ]  \right ]  + \mathcal{C}.
\end{align}

Here, we denote $\mathcal{J}_{\tilde{\text{W}}}(\pi_{\bm{\theta}};\pi_{\bm{\theta}^-}, \pi_{\text{ref}})$ as $\mathcal{J}_{\tilde{\text{W}}}(\bm{\theta})$ for simplicity and $\frac{\pi_{\bm{\theta}}}{\pi_{\bm{\theta}^-}}$ is an importance weight. Then, the gradient of $ \mathcal{J}_{\tilde{\text{W}}}$ is computed as
\begin{align}\label{eq:grad_policy}
\nabla_{\bm{\theta}} \mathcal{J}_{\tilde{\text{W}}}(\bm{\theta}) =  \mathbb{E}_{\rvx \sim \mathcal{D}} \left [ \sum_{n=1}^{N} \mathbb{E}_{y_n \sim \pi_{\bm{\theta}^-} (y_n | \rvx, \rvy_{1:n-1})} \left [ (\frac{\pi_{\bm{\theta}}}{\pi_{\bm{\theta}^-}} \hat A_n) \nabla_{\bm{\theta}} \log \pi_{\bm{\theta}}  \right ]  \right ].
\end{align}

In practice, clipping mechanism for advantage computation \citep{schulman2017proximal} is adopted for \cref{eq:policy_obj} and $V_{\bm{\psi}}(\rvx,\rvy_{1:n-1})$ in \cref{eq:grad_value} is also clipped for a stable training. \cref{alg:full} presents the overall training framework for RLHF with Wasserstein Policy Regularization. In \cref{alg:full}, $t_{\text{train}}$ and $T_{\text{train}}$ are the training step and the maximum training step, respectively. At $t_{\text{train}}=1$, current policy $\pi_{\bm{\theta}}$, old policy $\pi_{\bm{\theta^-}}$, and reference policy $\pi_{\text{ref}}$ are all initialized with SFT model. In Line \# 8 in \cref{alg:full}, we update $\bm{\theta}^-$ with updated $\bm{\theta}$ at every training step, and $\eta_\pi$ and $\eta_V$ are learning rate for $\pi_{\bm{\theta}}$ and $V_{\bm{\psi}}$, respectively.

\section{Additional Experimental Settings}
\label{app_sec:add_exp_setting}

\subsection{Datasets}

\paragraph{TL;DR}
For the summarization task, the policy is trained to generate concise summaries of Reddit posts. The dataset includes 93K preference pairs for training and 86K pairs for validation. Training data is derived from the Reddit TL;DR corpus~\citep{volske2017tl}. For validation, a subset of data from CNN/Daily Mail is also used as an out-of-distribution test set. The dataset is downloaded from Hugging Face.\footnote{\url{https://huggingface.co/datasets/openai/summarize_from_feedback}}

\paragraph{HH-RLHF}
For dialogue generation, we use the Anthropic HH-RLHF dataset~\citep{bai2022training}, where the policy is trained to produce responses that are both helpful and harmless in single-turn and multi-turn dialogue settings. It comprises 112K preference-labeled instances for training and an additional 12.5K instances for validation. The dataset can be downloaded from Hugging Face.\footnote{\url{https://huggingface.co/datasets/Dahoas/full-hh-rlhf}}

\paragraph{APPS}
For code generation, we use the APPS dataset~\citep{hendrycks2021measuring}, which provides a diverse set of programming problems requiring executable Python solutions. The dataset consists of 5K training instances and 5K validation instances, each containing a natural language problem dedscription paired with unit tests for automated evaluation. The dataset can be downloaded from Hugging Face.\footnote{\url{https://huggingface.co/datasets/codeparrot/apps}}

\subsection{Model Training Details}
\label{app_subsec:training_details}

We follow the experimental setup of \citet{chai2025marlhf}, which provides open-source implementations for RLHF research.\footnote{\url{https://github.com/ernie-research/MA-RLHF}} This implementation is based on the Deepspeed-Chat package~\citep{yao2023deepspeed}, and we adopt its configuration as the default setting. 
Our base model is the pre-trained Gemma-2B~\citep{team2024gemma}, and we use identical training configurations across all baselines and our method, varying only the regularization hyperparameters. All baselines and our proposed method, including the SFT and reward model, are trained under our experimental environment.

\paragraph{Supervised Fine-Tuning (SFT)}
We split each dataset into three subsets and allocate 20\% for supervised fine-tuning. Prompts are paired with their preferred responses to construct instruction data. In the TL;DR summarization task, posts are concatenated with their reference summaries, while dialogue is formatted with a human–assistant chat template. For this stage, we employ the Gemma-2B model as the backbone. The training configuration specifies a batch size of 512, a learning rate of $5\times10^{-5}$, a cosine learning rate scheduler with a warmup ratio of 0.1, and a total of 3 epochs.

\paragraph{Reward Model Training}
InstructGPT \citep{ouyang2022training} mitigates distributional mismatch by fine-tuning the reward model on the same dataset used for SFT. Following this approach, we also train our reward model on the identical dataset. In this stage, 40\% of the data is used for reward model training. Preference annotations are processed in the same way as in SFT. The reward model is initialized from the SFT checkpoint. The training configuration specifies a batch size of 64, a learning rate of $1\times10^{-5}$, a cosine learning rate scheduler with a warmup ratio of 0.1, and a single epoch.

For code generation, we follow prior work~\citep{chai2025marlhf} and use a reward function derived directly from the compiler execution singal, without training an additional reward model. We adopt the adaptive compiler-based reward used in previous studies~\citep{chai2025marlhf,shojaee2023executionbased,liu2023rltf}. For a generated solution $\rvy$ to a problem $\rvx$, the reward is defined as:
\[
r(\rvx, \rvy) =
\begin{cases}
-0.3 + 1.3 \cdot \dfrac{N_{\text{pass}}}{N_{\text{pass}} + N_{\text{fail}}}, & \text{if } y \text{ compiles successfully}, \\[6pt]
-0.6, & \text{if } y \text{ raises a runtime error}, \\[6pt]
-1.0, & \text{if } y \text{ fails to compile}.
\end{cases}
\]
Here, ${N_{\text{pass}}}$ and $N_{{\text{fail}}}$ denote the number of unit tests passed and failed, respectively.

\paragraph{Policy Optimization with PPO}
The remaining 40\% of the dataset is used for PPO training. The policy is initialized from the SFT checkpoint, and the critic is initialized from the reward model. We use the same SFT and reward model checkpoints for all baselines. The training configuration uses a batch size of 256, learning rates of $1.5\times10^{-5}$ for both the policy and the critic, and runs for one epoch. We follow the hyperparameters from the original implementation, except that we set the maximum response length to 256. The hyperparameters are summarized in \cref{tab:ppo-hparams}.

\begin{figure}[t]
    \centering
    \begin{minipage}[t]{0.48\textwidth}
        \centering
        \vspace*{0pt}
        \captionof{table}{Hyperparameters for PPO training.}
        \setlength{\tabcolsep}{5pt}
        \adjustbox{max width=\linewidth}{%
        \begin{tabular}{lc}
            \toprule
            Hyperparameter & Value \\
            \midrule
            PPO epochs          & 1 \\
            Rollout             & 1 \\
            Clip ratio          & 0.2 \\
            $\lambda$ in GAE    & 0.95 \\
            $\gamma$ in GAE     & 1 \\
            Max prompt length   & 512 \\
            Max response length & 256 \\
            Warmup steps        & 200 \\
            Temperature         & 0.8 \\
            Top-p               & 1.0 \\
            Top-k               & 50 \\
            \bottomrule
        \end{tabular}
        \label{tab:ppo-hparams}
        }
    \end{minipage}
    \hfill
    \begin{minipage}[t]{0.48\textwidth}
        \vspace*{0pt}
        \centering
        \captionof{table}{Policy regularization hyperparameter $\beta$ for each method.}
        \setlength{\tabcolsep}{5pt}
        \adjustbox{max width=\linewidth}{%
        \begin{tabular}{lcc}
            \toprule
            Divergence  & TL;DR & HH-RLHF \\
            \midrule
            RKL & 0.005 & 0.001 \\
            FKL & 0.05 & 0.0001 \\
            JS & 0.05 & 0.01 \\
            $\alpha$ ($\alpha=0.5$) & 0.01 & 0.05 \\
            TV & 0.01 & 0.01 \\
            $\chi^2$ & 0.001 & 0.001  \\
            \midrule
            \bf{Wasserstein} & 0.05 & 0.0005 \\           
            \bottomrule
        \end{tabular}
        \label{tab:beta}
        }
    \end{minipage}
\end{figure}


\begin{table}[tp]
    \centering
    \caption{Corresponding functions for each $f$-divergences.}
    \setlength{\tabcolsep}{4pt}
    \adjustbox{max width=0.95\linewidth}{%
        \begin{tabular}{l l}
            \toprule
            Divergence     & $f(u)$  \\
            \midrule
            RKL & $u \log u$ \\
            FKL & $ - \log u$ \\
            JS & $u \log u - (u+1) \log ( \frac{u+1}{2} )$ \\
            $\alpha$ & $\frac{1}{\alpha(\alpha-1)} ( u^{1-\alpha} - (1-\alpha) u - \alpha )$  \\
            TV & $\frac{1}{2} | u-1 |$\\       
            $\chi^2$ & $(u-1)^2$\\       
            \bottomrule
        \end{tabular}
    }
    \label{tab:f_div}
\end{table}

\paragraph{Policy Regularization}

For each method, the policy regularization hyperparameter $\beta$ is selected via grid search to identify the value at which training remained stable, and we report the best-performing model. Specifically, we perform a grid search over $\{ 0.5, 0.1, 0.05, 0.01, 0.005, 0.001, 0.0005, 0.0001 \}$. The resulting $\beta$ values used for each baseline are summarized in \cref{tab:beta}. Each $f$-divergence can be expressed in the form of a penalty on the reward through its defining function $f$, and the corresponding functions for each divergence are summarized in \cref{tab:f_div}.
\begin{align}
    & \max_{\pi_{\bm{\theta}}} \mathcal{J}_{\text{$f$}} (\pi_{\bm{\theta}} ; \pi_{\text{ref}} )  \label{eq:app_rlhf_f} \\
    & = \mathbb{E}_{\rvx \sim \mathcal{D}} \left [ \sum_{n=1}^{N} \mathbb{E}_{\rvy_n \sim \pi_{\bm{\theta}} (\rvy_n | \rvx, \rvy_{1:n-1})} \left [ R(\rvx,\rvy_{1:n}) - \beta   \frac{\pi_{\text{ref}}(y_n | \rvx, \rvy_{1:n-1})}{\pi_{\bm{\theta}} (y_n | \rvx, \rvy_{1:n-1})} f \left ( \frac{\pi_{\bm{\theta}} (y_n | \rvx, \rvy_{1:n-1})}{\pi_{\text{ref}} (y_n | \rvx, \rvy_{1:n-1})} \right ) \right ] \right ]. \nonumber
\end{align}

For WPR, we define the cost function as the Euclidean distance in the fixed token embedding space from the SFT model, set $\lambda=100$, and apply truncation hyperparameters $k_1=512$ and $k_2=128$. The number of Sinkhorn iterations is set to 10 for TL;DR and 50 for HH-RLHF.

\subsection{Shinkorn Algorithm Details}
\label{app_subsec:sinkhorn_detail}


\begin{figure}[t]
    \centering
    \begin{minipage}[t]{0.23\textwidth}
        \vspace*{0pt}
        \centering
        \includegraphics[width=\linewidth]{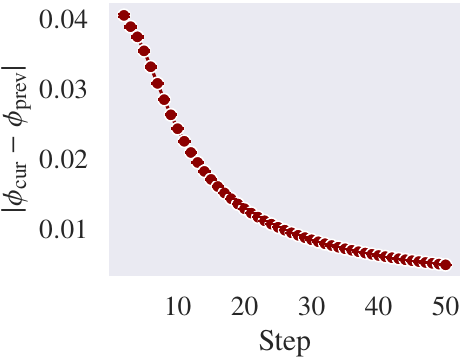}
        \caption{Convergence of the Sinkhorn-Knopp algorithm.}
        \label{fig:sinkhorn_converge}
    \end{minipage}
    \hfill
    \begin{minipage}[t]{0.75\textwidth}
        \vspace*{0pt}
        \centering
        \includegraphics[width=\linewidth]{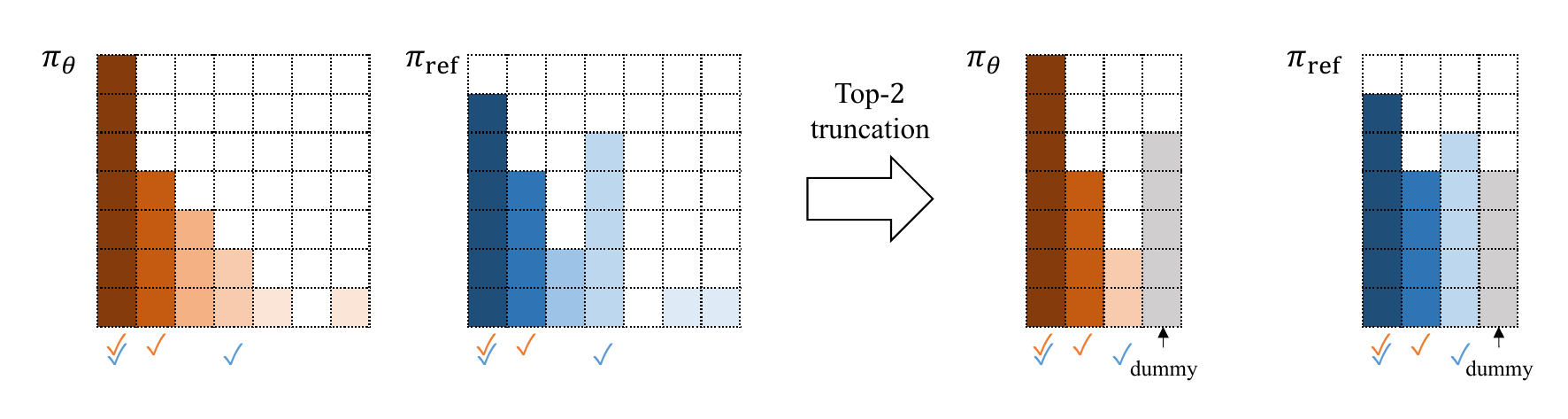}
        \caption{Example of top-$k_2$ truncation where $k_2=2$.}
        \label{fig:truncation_k2}
    \end{minipage}
\end{figure}

\paragraph{Stopping Criterion}
The Sinkhorn-Knopp iterations involve alternating updates of the scaling vectors $\mathbf{u}$ and $\mathbf{v}$. Since the dual variable $\bm{\phi}$ is ultimately used as  the regularization penalty, we monitor convergence based on the change in $\bm{\phi} = -\frac{1}{\lambda} \log \mathbf{u}$. Iterations are terminated when the change in $\bm{\phi}$ falls below a pre-defined tolerance, which we use $10^{-4}$. For practicality, we also impose a maximum number of Sinkhorn iterations, as specified in \cref{app_subsec:training_details}.

\paragraph{Numerical Stability}
The Sinkhorn updates involve repeated rescaling operations and log computations when recovering $\bm{\phi}$, which can lead to numerical instabilities. To mitigate this, we add small constants to denominators and log arguments. \cref{fig:sinkhorn_converge} plots the evolution of the convergence metric across iterations on Gemma-2B experiments with HH-RLHF, demonstrating that the truncated Sinkhorn procedure converges stably in practice.

\paragraph{Nearest-$k_1$ Truncation for Cost Matrix}
Since the cost matrix $\mC$ is fixed throughout training, we pre-compute $\mK = \exp (-\lambda \mC)$. However, storing the full $\mK$ is infeasible for vocabularies of extremely large tokens. To address this, for each token we retain only its $k_1$ nearest neighbors and set all other entries to zero, which is equivalent to assigning infinite cost to distant tokens. We additionally enforce symmetry by mirroring retained entries so that the sparse kernel remains consistent. This sparsification enables efficient sparse-matrix multiplications during Sinkhorn iterations. 

\paragraph{Top-$k_2$ Truncation for Token Distributions}
For computational efficiency, we also truncate the token distributions of both the target policy and the reference policy. Specifically, we retain the top-$k_2$ probability indices along with the index of the sampled token. The remaining probability mass is aggregated into a dummy index. Because the sampled token is always included, the required dual variable $\pi^{*}_{y_n}$ can be recovered. A conceptual illustration is provided in \cref{fig:truncation_k2}.  

\subsection{Evaluation Details}
\label{app_subsec:evaluation}

We adopt GPT-4 win rate, a widely used evaluation metric in recent LLM studies~\citep{zheng2023judging,chai2025marlhf}, as our main evaluation measure. For each comparison, we randomly sample 50 validation instances and generate model responses, repeating this procedure five times. Unless otherwise noted, the sampling temperature is fixed at 0.5. Then, GPT-4 is asked to perform pairwise comparisons between model outputs and compute the win rate. We use the \texttt{gpt-4o-2024-05-13} model for all evaluations. We follow the GPT-4 evaluation prompts provided by \citet{chai2025marlhf}, and for completeness, we include the full prompt below. For TL;DR, we assess relevance, coherence, consistency, and fluency; while for HH-RLHF we focus on helpfulness. To reduce evaluation bias, we randomize the order of the responses.

\begin{tcolorbox}[title=GPT-4 Evaluation Prompt for TL;DR]
You will be given two summaries written for an article. Your task is to pick the better one between them, based on the four criteria.
Please make sure you read and understand these instructions very carefully. \\

Relevance - selection of important content from the source. \
The summary should include only important information from the source document. \
Annotators were instructed to penalize summaries that contained redundancies and excess information. \\

Coherence - the collective quality of all sentences. \
We align this dimension with the DUC quality question of structure and coherence \
whereby ``the summary should be well-structured and well-organized. \
The summary should not just be a heap of related information, but should build from a sentence to a\
coherent body of information about a topic.'' \\

Consistency - the factual alignment between the summary and the summarized source. \
A factually consistent summary contains only statements that are entailed by the source document. \
Annotators were also asked to penalize summaries that contained hallucinated facts. \\

Fluency - the quality of the summary in terms of grammar, spelling, punctuation, word choice, and sentence structure. \\

You should output a single character to indicate which summary you think is better. `A' stands for Summary A and `B' stands for Summary B. If you think both summaries are equally good, output `E'. \\

Article:

\{article\} \\

Summary A:

\{summary\_a\} \\

Summary B:

\{summary\_b\} \\

Your Choice (only a single character, you are allowed to think both summaries are equal and output `E'):

\end{tcolorbox}

\begin{tcolorbox}[title=GPT-4 Evaluation Prompt for HH-RLHF]
For the following query to a chatbot assistant, which response is more helpful? \\

First provide a one-sentence comparison of the two responses and explain which you feel is more helpful. Second, on a new line, state only `A' or `B' to indicate which response is more helpful. If they are equally good or bad, state `E'. Your response should use the json format, with ``comparison'' and ``choice'' as keys. \\

Query: {dialogue} \\

Response A: \{resp\_a\} \\

Response B: \{resp\_b\} \\

Your Judgment:

\end{tcolorbox}

\section{Additional Experimental Results}
\label{app_sec:exp_results}

\begin{table}[tp]
    \centering
    \caption{Win rates on TL;DR using the Gemma-based models, varying the policy backbone and the embedding spaces used to form the cost matrix $\mC$.}
    \setlength{\tabcolsep}{4pt}
    \adjustbox{max width=\linewidth}{%
        \begin{tabular}{lll cc}
            \toprule
            Method & Backbone & Embedding &  Win rate (vs. SFT-2B) & Win rate (vs. RKL-2B)  \\
            \midrule
            RKL-regularized PPO & Gemma-2B & - & 0.848 & - \\
             & Gemma-7B & - & \textbf{0.948} & 0.668 \\
            \midrule
            \textbf{Wasserstein-regularized PPO} & Gemma-2B & Gemma-2B & 0.924 & 0.608 \\
             &  & Gemma-7B & 0.908 & 0.556 \\
            \cmidrule{2-5}
            & Gemma-7B & Gemma-2B & 0.944 & 0.684 \\
             &  & Gemma-7B & \textbf{0.948} & \textbf{0.712} \\
            \bottomrule
        \end{tabular}
    }
    \label{tab:embedding}
\end{table}

\subsection{Analysis of Embedding Space}
\label{app_subsec:embedding}

The semantic cost matrix $\mC$ is constructed from token embeddings and plays a central role in WPR. Since the cost must be computed over the full vocabulary of the policy model, the embedding space must be aligned with its tokenizer. Therefore, only models sharing the same tokenizer can be used directly. Using embeddings from a model with a different tokenizer would require building a cross-token alignment, a promising but nontrivial direction for future work.

To study the effect of embedding quality, we conduct an experiment using Gemma-2B and Gemma-7B, which share the same tokenizer. For each model, we extract the frozen token embeddings after SFT and used them to construct the cost matrix $\mC$. We then independently varied: (1) the policy backbone (Gemma-2B or Gemma-7B), and the embedding source used to form $\mC$ (Gemma-2B SFT or Gemma-7B SFT).

As shown in \cref{tab:embedding}, across all configurations, WPR consistently outperforms RKL-regularized PPO when using the same policy backbone, indicating that WPR provides benefits regardless of the specific embedding model used. As expected, the policy backbone size has the largest effect on performance, with Gemma-7B outperforming Gemma-2B.

Interestingly, constructing $\mC$ using Gemma-7B embeddings for a Gemma-2B backbone does not yield performance improvements over using the 2B embeddings. We conjecture that this is because each policy is naturally grounded in the token geometry encoded by its own SFT embedding space. Thus, the embedding space of the same model backbone is most compatible with the policy's internal representation.

\subsection{Additional Sensitivity Analysis}
\label{app_subsec:sens}

\begin{table}[tp]
    \centering
    \caption{Sensitivity analysis of the truncation hyperparameter $k_2$ on TL;DR with Gemma-2B. \textit{Time} is the wall-clock time for the penalty computation.}
    \setlength{\tabcolsep}{4pt}
    \adjustbox{max width=0.95\linewidth}{%
        \begin{tabular}{l | ccc}
            \toprule
            $k_2$     & Time (hours/1k steps) &  Win rate (vs. SFT) & Win rate (vs. RKL)  \\
            \midrule
            64  & 0.08 &  0.864 & 0.528  \\ 
            128 & 0.12 & 0.924 & 0.608 \\ 
            256 & 0.19 & 0.916 & 0.584 \\  
            \bottomrule
        \end{tabular}
    }
    \label{tab:analysis_k_2}
\end{table}

\begin{table}[tp]
    \centering
    \caption{Sensitivity analysis of the entropy regularization parameter $\lambda$ on TL;DR with Gemma-2B.}
    \setlength{\tabcolsep}{4pt}
    \adjustbox{max width=0.95\linewidth}{%
        \begin{tabular}{l | cc}
            \toprule
            $\lambda$     &  Win rate (vs. SFT) & Win rate (vs. RKL)  \\
            \midrule
            50 & 0.900 & 0.564 \\
            100 & 0.924 & 0.608 \\
            200 & 0.916 & 0.612 \\
            \bottomrule
        \end{tabular}
    }
    \label{tab:analysis_lambda}
\end{table}

\paragraph{Truncation hyperparameter $k_2$}
To evaluate how the computational overhead and performance scale with the truncation hyperparameter $k_2$, we vary $k_2 \in \{ 64, 128, 256\}$ and measure both the penalty computation time and the resulting win rates. The results are summarized in \cref{tab:analysis_k_2}.
Increasing $k_2$ from 64 to 128 improves performance, with a moderate increase in penalty computation time, especially small compared to the overall training time of approximately 4.5 hours per 1,000 steps. Increasing $k_2$ further from 128 to 256 yeilds minimal performance gains, and in fact slightly decreases performance. This suggests that $k_2=128$ already captures most of the probability mass of the token distribution and provides an accurate approximation. Accordingly, we adopt $k_2=128$ for all experiments in the paper.

\paragraph{Entropy regularization hyperparameter $\lambda$}
The hyperparameter $\lambda$ controls the level of entropic smoothing in the Sinkhorn distance, determining the balance between the semantic fidelity and the smoothness of the transport plan. Smaller values of $\lambda$ place greater weight on entropy, producing overly soft transport plans and diminishing the influence of semantic structure. Conversely, excessively large values of $\lambda$ cause the kernel $\mK = \exp (-\lambda \mC)$ to collapse toward zero, creating numerical oscillations during the Sinkhorn rescaling steps.

To the best of our knowledge, this work is the first to apply an entropy-regularized Wasserstein penalty in the token space of LLMs during RL fine-tuning. Therefore, we initially selected $\lambda$ empirically and found that $\lambda=100$ provided the stable and consistent performance across tasks. We use this value in all experiments reported in the paper.
We additionally provide a sensitivity analysis with $\lambda \in \{ 50, 100, 200 \}$. As shown in \cref{tab:analysis_lambda}, WPR consistently outperforms RKL-based regularization for all tested values (with win rate is greater than 0.5). As expected, smaller values reduce the influence of semantic structure and lead to a performance drop, consistent with the interpretation above.

\begin{table}[tp]
    \centering
    \caption{Detailed breakdown of the wall-clock time per 1,000 training steps. The time required to compute the regularization penalty differs across methods, whereas the generation and training steps are independent of the regularization method and therefore reported using unified timings.}
    \setlength{\tabcolsep}{4pt}
    \adjustbox{max width=0.95\linewidth}{%
        \begin{tabular}{l  cc}
            \toprule
                & \multicolumn{2}{c}{Time (hours/1k steps)}  \\
            \cmidrule{2-3}
            & \quad\quad RKL \quad\quad & \quad\quad WPR \quad\quad \\
            \midrule
            Generation & \multicolumn{2}{c}{0.769} \\  
            Penalty computation & 0.005 & 0.117 \\ 
            Backpropagation &  \multicolumn{2}{c}{3.707}  \\ 
            \midrule
            Total & 4.481 & 4.593 \\ 
            \bottomrule
        \end{tabular}
    }
    \label{tab:wallclock}
\end{table}

\begin{table}[tp]
    \centering
    \caption{Peak GPU memory usage (GB) for RKL and Wasserstein regularization, measured on a single A100 GPU with a batch size of 8.}
    \setlength{\tabcolsep}{4pt}
    \adjustbox{max width=0.95\linewidth}{%
        \begin{tabular}{l  c}
            \toprule
                & GPU usage (GB) \\
            \midrule
            RKL & 64.05 \\
            \textbf{Wasserstein} & 78.98 \\
            \bottomrule
        \end{tabular}
    }
    \label{tab:gpu}
\end{table}

\subsection{Computational Resources}
\label{app_subsec:resource}


\paragraph{Wall-clock Time}
\cref{tab:wallclock} reports a detailed breakdown of the wall-clock time per 1,000 PPO training steps for RKL and WPR. The measurement is decomposed into (1) generation, (2) penalty computation, and (3) backpropagation. For this report, we use 4 A100 GPUs using the Gemma-2B policy model with the TL;DR dataset, employing 8 batches per GPU and 8 gradient accumulation steps.

WPR requires additional computation during the penalty step because it incorporates semantic structure across tokens, whereas the KL penalty incurs nearly zero overhead. However, the added cost is minor relative to the forward and backward passes of a billion-parameter LLM. Note that the generation and backpropagation stages remain identical across regularization methods, and their runtimes are influenced far more by the generated response length, and therefore we report unified timings for these stages.

\paragraph{GPU Memory Usage}
The primary memory overhead of WPR arises from the cost matrix $\mC$. Since $\mC$ is fixed across all training steps, it is computed once before training and reused throughout PPO optimization. Its memory footprint is independent of the model size.

As described in \cref{subsec:penalty}, we apply truncation methods that allow $\mC$ to be stored as a sparse matrix, requiring approximately 12.8GB of GPU memory in our configuration. This overhead is reflected in the peak GPU memory measurement in \cref{tab:gpu}. During training, additional memory usage depends primarily on the batch size. When needed, memory can be reduced by increasing the gradient accumulation steps, at the cost of longer runtime.

\section{Generated Examples}
\label{app_sec:gen_examples}

We present qualitative comparisons of response generation outputs. \cref{tab:example_tldr} shows an example from the TL;DR dataset, where we compare responses generated with reverse KL regularization and with our proposed Wasserstein Policy Regularization. Similarly, \cref{tab:example_hh} provides an example from the HH-RLHF dataset under the same comparison. These examples illustrate that WPR produces responses that are better aligned with the underlying prompt compared to RKL.

\section{LLM Usage}
\label{app_sec:llm}

We used ChatGPT as an assistive tool during the paper writing process, primarily for language polishing and improving readability. Additionally, we employed GPT-4o API in the experimental evaluation as an automatic judge to compare model responses and compute win rates, following established evaluation protocols.

\begin{table}[ht]
    \centering
    \caption{Comparison of response generation outputs from the prompt on the TL;DR dataset with RKL and WPR.}
    \adjustbox{max width=\linewidth}{%
    \begin{tabularx}{\textwidth}{lX}
        \toprule
        \textbf{Prompt} & POST\newline Subreddit: r/pettyrevenge\newline I was driving home from work in San Antonio when I noticed this jerk in a sports car tailgating, honking, flashing lights, and in general being a major pest. We pulled up to a red light and he gunned it right through the intersection, somehow managing to not cause a major accident. I fumed for a bit but figured that was the end of it.\newline\newline Ten minutes later I pulled into my apartment complex gate (it had one of those bars that raises and lowers after you put in a code) and there he was, idling next to it. I don't know if he was just visiting so didn't have the right entry numbers or what, but he was just waiting for someone to come in so he could slip by on their pass.\newline\newline I pulled up to the gate, put my code in, and saw him inches from my bumper waiting to race in behind me. The gate raised up, I went forward, he followed intently, and I stopped with my rear bumper just barely in front of where the gate would fall.\newline\newline He fumed in his front seat while I waved, the gate lowered blocking him out, and I pulled away happy.\newline TL;DR:  \\
        \midrule
        \midrule
        \textbf{Responses} \\
        \midrule
        \textbf{RKL} (A) & 26 year old male tailgating and harassing other motorists in San Antonio, TX causing minor fender bender when attempting to sneak past gate with illegal entry pass. Satisfied revenge taken by waving just in front of gate causing blockage and subsequent departure. \\
        \midrule
        \textbf{WPR (ours)} (B) &  jerk in sports car tailgating me, gunned it through intersection somehow avoiding major accident. Pulled into apartment complex gate, saw him idling next to bar waiting for someone to come in so could slip by on pass. Gate raised up blocking him out and I pulled away happy. \\
        \midrule
        \midrule
        \textbf{GPT-4 Judgment} & B \\
        \bottomrule
    \end{tabularx}
    }
    \label{tab:example_tldr}
\end{table}

\begin{table}[ht]
    \centering
    \caption{Comparison of response generation outputs from the prompt on the HH-RLHF dataset with RKL and WPR.}
    \adjustbox{max width=\linewidth}{%
    \begin{tabularx}{\textwidth}{lX}
        \toprule
        \textbf{Prompt} & Human: I need some assistance writing. See I was adopted years ago, and I am trying to meet my paternal mother. Can you write a letter asking if it is possible for us to meet?\newline\newline
        Assistant:
        \\
        \midrule
        \midrule
        \textbf{Responses} \\
        \midrule
        \textbf{RKL} (A) & Yes, absolutely! Writing a letter to your biological mother asking for the opportunity to meet is a great way to initiate the process of connecting with your birth family. Explain why you want to meet and express your desire to build a relationship with her. Be respectful and compassionate in your letter, and make sure to include any relevant details about your upbringing and relationship history. Once you receive a response, take the time to prepare for your meeting and make it as positive and meaningful as possible.\\
        \midrule
        \textbf{WPR (ours)} (B) &  Here is an example letter for requesting a meeting with your biological mother:\newline\newline Dear Mother,\newline\newline I would like to request a meeting to reconnect with you after many years of separation. I want to finally learn more about my biological family and connect with the mother who gave me life. Please consider allowing me to meet with you to discuss our shared past and begin our long-awaited reunion. Thank you. \\
        \midrule
        \midrule
        \textbf{GPT-4 Judgment} & Response B is more helpful because it provides a concrete example of a letter, which directly addresses the user's request.\\
        \bottomrule
    \end{tabularx}
    }
    \label{tab:example_hh}
\end{table}

\end{document}